\newtheorem{theorem}{Theorem}
\newtheorem{proposition}[theorem]{Proposition}
\icmltitlerunning{Learning Transformations for Classification Forests}
\begin{document}

\twocolumn[
\icmltitle{Learning Transformations for Classification Forests}

\icmlauthor{Qiang Qiu}{qiang.qiu@duke.edu}
\icmladdress{Department of Electrical and Computer Engineering,\\
       Duke University, \\
       Durham, NC 27708, USA}
\icmlauthor{Guillermo Sapiro}{guillermo.sapiro@duke.edu}
\icmladdress{Department of Electrical and Computer Engineering,\\
       Department of Computer Science, \\
       Department of Biomedical Engineering,\\
       Duke University, \\
       Durham, NC 27708, USA}

\icmlkeywords{Random forest, subspace clustering, classification, low-rank transformation, nuclear norm, feature learning.}

\vskip 0.3in
]

\begin{abstract}
This work introduces a transformation-based learner model for classification forests.
The weak learner at each split node plays a crucial role in a classification tree.
We propose to optimize the splitting objective by learning a linear transformation on subspaces using nuclear norm as the optimization criteria.
The learned linear transformation restores a low-rank structure for data from the same class, and, at the same time, maximizes the separation between different classes, thereby improving the performance of the split function.
Theoretical and experimental results support the proposed framework.
\end{abstract}

\section{Introduction}

Classification Forests \cite{RF2001, RFBook} have recently shown great success for a large variety of classification tasks, such as pose estimation \cite{RF-pose}, data clustering \cite{RF-clustering}, and object recognition \cite{RF-obj}. A classification forest is an ensemble of randomized classification trees.
A classification tree is a set of hierarchically connected tree nodes, i.e., \emph{split} (internal) nodes and \emph{leaf} (terminal) nodes. Each split node is associated with a different weak learner with binary outputs (here we focus on binary trees). The splitting objective at each node is optimized using the training set. During testing, a split node evaluates each arriving data point and sends it to the left or right child based on the weak learner output.

The weak learner associated with each split node plays a crucial role in a classification tree.
An analysis of the effect of various popular weak learner models can be found in \cite{RFBook}, including decision stumps, general oriented hyperplane learner, and conic section learner.
In general, even for high-dimensional data, we usually seek for low-dimensional weak learners that separate
different classes as much as possible.

High-dimensional data often have a small intrinsic dimension.
For example, in the area of computer vision, face images of a subject \cite{9point}, \cite{Wright09}, handwritten images of a digit \cite{ocr}, and trajectories of a moving object \cite{sfm}, can all be well-approximated by a low-dimensional subspace of the high-dimensional ambient space. Thus, multiple class data often lie in a union of low-dimensional subspaces.
These theoretical low-dimensional intrinsic structures are often violated for real-world data.
For example, under the assumption of Lambertian reflectance, \cite{9point} show that face images of a subject obtained under a wide variety of lighting conditions can be accurately approximated  with a 9-dimensional linear subspace. However,  real-world face images are often captured under additional  pose variations; in addition, faces are not perfectly Lambertian, and exhibit cast shadows and specularities \cite{rpca}.

When data from the same low-dimensional subspace are arranged as columns of a single matrix,  the matrix should be approximately low-rank.  Thus, a promising way to handle corrupted underlying structures of realistic data, and as such, deviations from ideal subspaces, is to restore such low-rank structure. Recent efforts have been invested in seeking transformations such that the transformed data can be decomposed as the sum of a low-rank matrix component and a sparse error one \cite{RASL, lrsalient, TILT}.
\cite{RASL} and \cite{TILT} are proposed for image alignment (see \cite{3dalign} for the extension to multiple-classes with applications in cryo-tomograhy), and \cite{lrsalient} is discussed in the context of salient object detection. All these methods build on recent theoretical and computational advances in rank minimization.

In this paper, we present a new formulation for random forests, and propose to learn a linear discriminative transformation at each split node in each tree to improve the class separation capability of weak learners.
We optimize the data splitting objective using matrix rank, via its nuclear norm convex surrogate,  as the learning criteria.
We show that the learned discriminative transformation recovers a low-rank structure for data from the same class, and, at the same time, maximize the subspace angles between different classes.
Intuitively, the proposed method shares some of the attributes of the Linear Discriminant Analysis (LDA) method, but with a significantly different metric. Similar to LDA, our method reduces intra-class variations and increases inter-class separations to achieve improved data splitting. However, we adopt the matrix nuclear norm as the key criterion to learn a transformation, being this appropriate for data expected to be in (the union of) subspaces. As shown later, our method significantly outperforms the LDA method, as well as state-of-the-art learners in classification forests.
{The learned transformations 
help in other classification task as well, e.g., subspace based
methods \cite{lowrankT}.}

\section{Transformation Forests}

A classification forest is an ensemble of binary classification trees,
where each tree consists of hierarchically connected \emph{split} (internal) nodes and \emph{leaf} (terminal) nodes.
Each split node corresponds to a weak learner, and evaluates each arriving data point and sends it to the left
or right child based on the weak learner binary outputs.
Each leaf node stores the statistics of the data points that arrived during training.
During testing, each classification tree returns a class posterior probability for a test sample, and the forest output is often defined as the average of tree posteriors.
In this section, we introduce transformation learning at each split node to dramatically improve the class separation capability of a weak learner. Such learned transformation is virtually computationally free at the testing time.

\subsection{Learning Transformation Learners}
\label{sec:form}

Consider two-class data points $\mathbf{Y} = \{ \mathbf{y}_i\}_{i=1}^N \subseteq \mathbb{R}^d$,
with each data point $\mathbf{y}_i$ in one of the $C$ low-dimensional subspaces of $\mathbb{R}^d$, and the data arranged as columns of $\mathbf{Y}$. We assume the class labels are known beforehand for training purposes. $\mathbf{Y}_+$ and $\mathbf{Y}_-$ denote the set of points in each of the two classes respectively, points again arranged as columns of the corresponding matrix.

We propose to learn a $d \times d$ linear transformation $\mathbf{T}$,\footnote{We can also consider learning a $s \times d$ matrix, $s < d$, and simultaneously reducing the data dimension.}
\begin{align} \label{nuclear_obj}
\underset{\mathbf{T}} \arg \min ||\mathbf{T Y}_+||_* + ||\mathbf{T Y}_-||_* - ||\mathbf{T [Y_+, Y_-]}||_*, \\\nonumber
~~s.t. ||\mathbf{T}||_2 = 1,
\end{align}
where $\mathbf{[Y_+, Y_-]}$ denotes the concatenation of $\mathbf{Y_+}$ and $\mathbf{Y_-}$, and
 $||\mathbf{\cdot}||_*$ denotes the matrix nuclear norm, i.e., the sum of the singular values of a matrix. The nuclear norm is the convex envelop of the rank function over the unit ball of matrices \cite{rank-min}. As the nuclear norm  can be optimized efficiently, it is often adopted as the best convex approximation of the rank function in the literature on rank optimization (see, e.g., \cite{rpca} and \cite{Recht}). The normalization condition $||\mathbf{T}||_2 = 1$ prevents the trivial solution $\mathbf{T}=0$; however, the effects of different normalizations  is interesting and is the subject of future research. Throughout this paper we keep this particular form of the normalization which was already proven to lead to excellent results.

As shown later, such linear transformation restores a low-rank structure for data from the same class, and, at the same time, maximizes the subspace angles between classes.
In this way, we reduce the intra-class variation and introduce inter-class separations to improve the class separation capability of a weak learner.

\subsection{Theoretical Analysis}
\label{sec:nuclear}

\begin{figure*} [ht]
\centering
 \subfloat[][$\begin{bmatrix} \theta_{AB}=0.79,& \theta_{AC}=0.79, &\theta_{BC}=1.05 \end{bmatrix}$ \\ $~~~~~~\mathbf{Y}_+=\{\mathbf{A}(green), \mathbf{B}(blue)\}, \mathbf{Y}_-= \{\mathbf{C}(red)\} $.] {\label{fig:3d45O} \includegraphics[angle=0, height=0.25\textwidth, width=.47\textwidth]{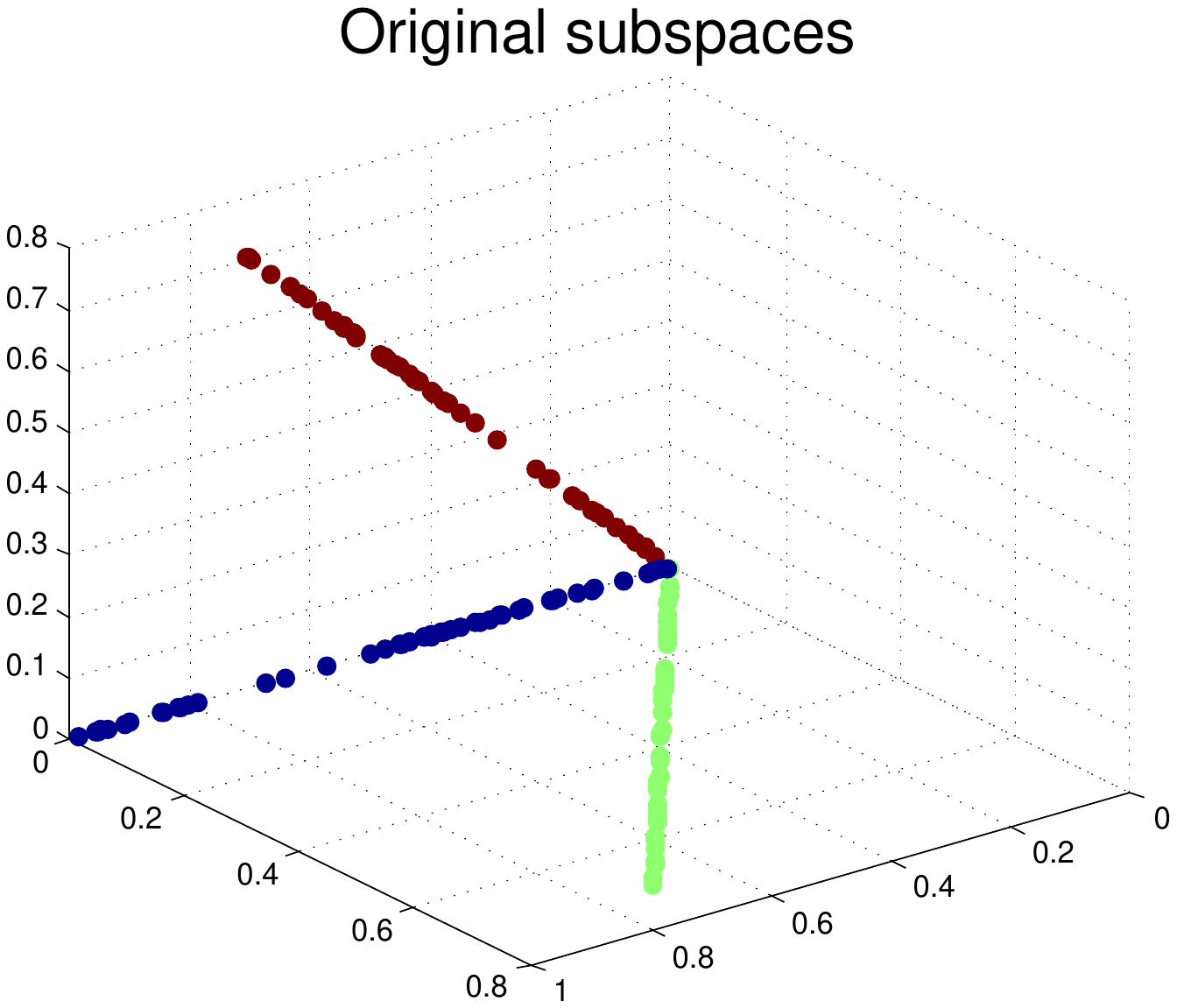}}
  \subfloat[][$\mathbf{T} = \begin{bmatrix} 0.42  & 0.33 &  -0.13 \\  0.39  &  0.32  &  -0.16 \\ -0.17  &  -0.14  &  0.81 \end{bmatrix}$; \\ $~~~~~\begin{bmatrix} \theta_{AB}= 0.006, &\theta_{AC}=1.53, &\theta_{BC}=1.53 \end{bmatrix}$.] {\label{fig:3d45T} \includegraphics[angle=0, height=0.25\textwidth, width=.45\textwidth]{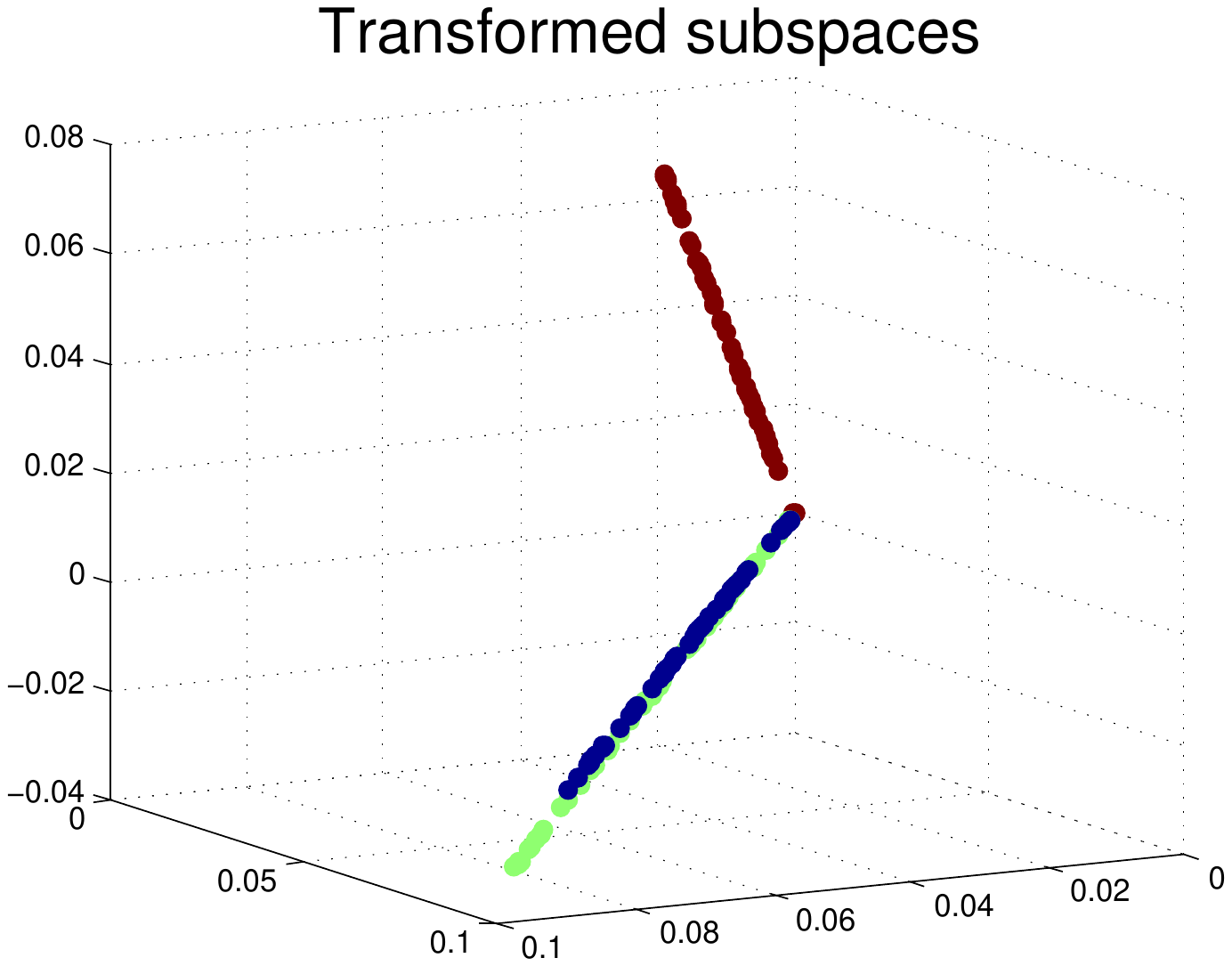}} \\
   \subfloat[][$\begin{bmatrix} \theta_{AB}=1.05,& \theta_{AC}=1.05, &\theta_{AD}=1.05, \\ \theta_{BC}=1.32,& \theta_{BD}=1.39, &\theta_{CD}=0.53 \end{bmatrix}$,\\
 $~~~~\mathbf{Y}_+=\{\mathbf{A}(blue), \mathbf{B}(light~blue)\}$, \\$~~~~\mathbf{Y}_-=\{\mathbf{C}(yellow), \mathbf{D}(red)\} $.] {\label{fig:3d45O} \includegraphics[angle=0, height=0.25\textwidth, width=.47\textwidth]{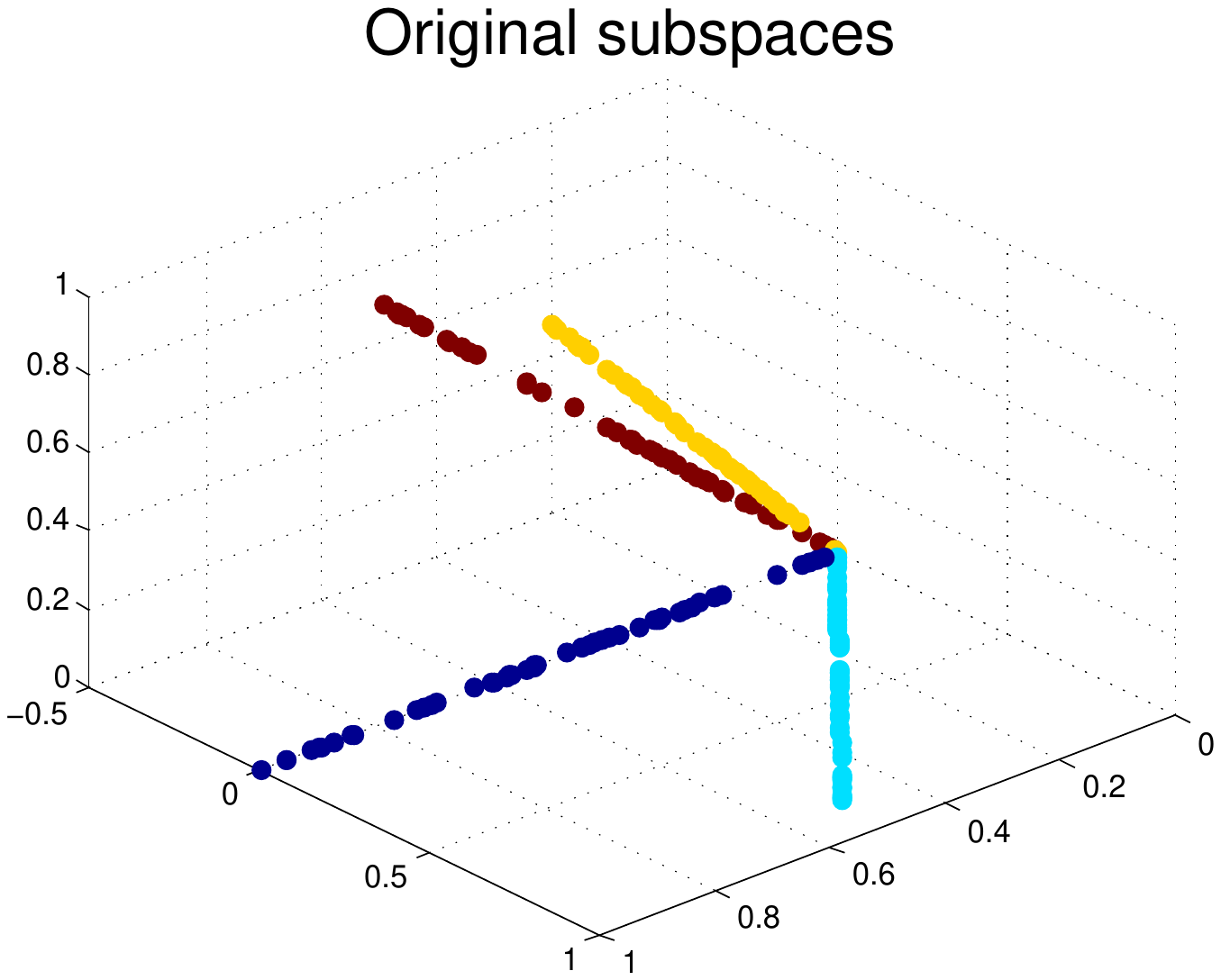}}
  \subfloat[][$\mathbf{T} = \begin{bmatrix} 0.48  & 0.08 &  -0.03 \\ 0.18  & 0.04  &  -0.16 \\ -0.03  &  -0.01  &  0.98 \end{bmatrix}$; \\ $~~~~~\begin{bmatrix} \theta_{AB}=0.03,& \theta_{AC}=1.41, &\theta_{AD}=1.40, \\ \theta_{BC}=1.41,& \theta_{BD}=1.41, &\theta_{CD}=0.01 \end{bmatrix}$.] {\label{fig:3d45T} \includegraphics[angle=0, height=0.25\textwidth, width=.45\textwidth]{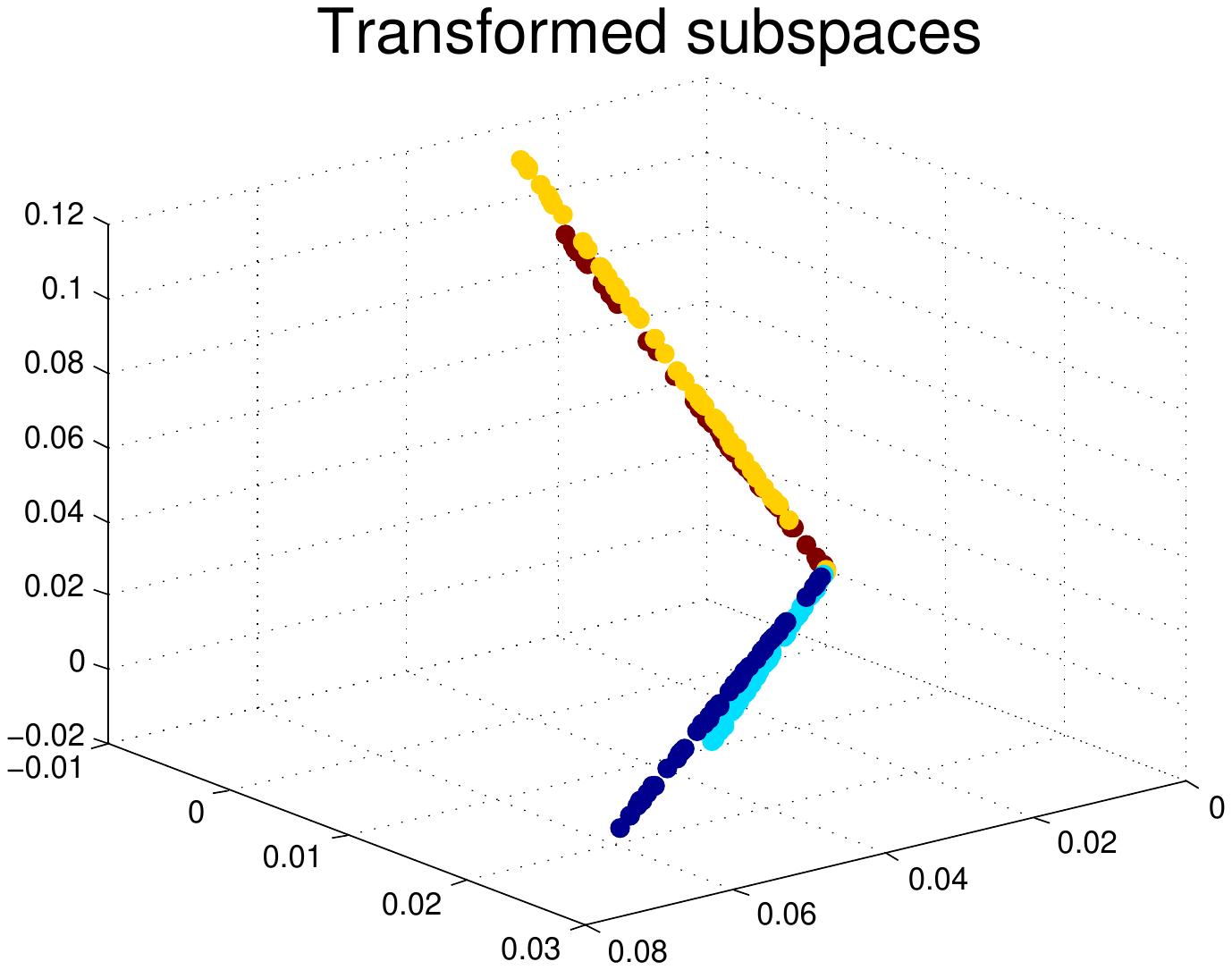}}
\caption{Learning transformation $\mathbf{T}$ using (\ref{nuclear_obj}). We denote the angle between subspaces $\mathbf{A}$ and $\mathbf{B}$ as $\theta_{AB}$ (and analogous for the other pairs of subspaces).
As indicated in (a) and (c), we assign subspaces to different classes $\mathbf{Y}_+$ and $\mathbf{Y}_-$.
Using (\ref{nuclear_obj}), we transform subspaces in (a),(c) to (b),(d) respectively.
We observe that the learned transformation $\mathbf{T}$ increases the inter-class subspace angle towards the maximum $\frac{\pi}{2}$, and reduces intra-class subspace angle towards the minimum $0$.
}
\label{fig:synthetic}
\end{figure*}

One fundamental factor that affects the performance of weak learners in a classification tree is the separation between different class subspaces.
An important notion to quantify the separation between two subspaces $\mathcal{S}_i$ and $\mathcal{S}_j$ is the smallest principal angle $\theta_{ij}$ \cite{prin_angle}, \cite{SSC}, defined as
\begin{align} \label{pangle}
\theta_{ij} = \underset{\mathbf{u} \in \mathcal{S}_i, \mathbf{v} \in \mathcal{S}_j} \min \arccos
\frac{\mathbf{u}'\mathbf{v}}{||\mathbf{u}||_2||\mathbf{v}||_2}.
\end{align}
Note that $\theta_{ij} \in [0, \frac{\pi}{2}].$
We show next that
 the learned transformation $\mathbf{T}$ using the objective function (\ref{nuclear_obj})  maximizes the angle between subspaces of different classes, leading to improved data splitting  in a tree node.
 We start by presenting some basic norm relationships for matrices and their corresponding concatenations.

\begin{theorem} \label{nuclear_ineq}
Let $\mathbf{A}$ and $\mathbf{B}$ be matrices of the same row dimensions, and $\mathbf{[A,B]}$ be the concatenation of $\mathbf{A}$ and $\mathbf{B}$,  we have
\begin{align} \nonumber
||\mathbf{[A,B]}||_*  \le ||\mathbf{A}||_* + ||\mathbf{B}||_*,
\end{align}
with equality obtained if the column spaces of $\mathbf{A}$ and $\mathbf{B}$ are orthogonal.
\end{theorem}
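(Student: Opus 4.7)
My plan is to split the statement into the two separate assertions: first the subadditivity inequality, then the sufficient condition for equality. Both parts can be attacked directly from the definition of the nuclear norm as the sum of singular values, without needing the variational/dual characterization, although a duality argument would also work.

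For the inequality, the key observation is the additive decomposition
\begin{align*}
[\mathbf{A},\mathbf{B}] \;=\; [\mathbf{A},\mathbf{0}] + [\mathbf{0},\mathbf{B}].
\end{align*}
Since the nuclear norm is a genuine matrix norm, it satisfies the triangle inequality, so it suffices to show that padding with zero columns does not change the nuclear norm, i.e.\ $\|[\mathbf{A},\mathbf{0}]\|_* = \|\mathbf{A}\|_*$ and similarly for $\mathbf{B}$. I would verify this by taking a thin SVD $\mathbf{A} = \mathbf{U}\boldsymbol{\Sigma}\mathbf{V}^{T}$ and writing $[\mathbf{A},\mathbf{0}] = \mathbf{U}\boldsymbol{\Sigma}\widetilde{\mathbf{V}}^{T}$ with $\widetilde{\mathbf{V}}$ obtained by stacking $\mathbf{V}$ over a zero block; this $\widetilde{\mathbf{V}}$ still has orthonormal columns, so the nonzero singular values are unchanged. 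Combining the two facts gives the desired bound.

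For the equality case under orthogonal column spaces, the cleanest route is to construct an SVD of $[\mathbf{A},\mathbf{B}]$ explicitly from thin SVDs of $\mathbf{A}$ and $\mathbf{B}$. Writing $\mathbf{A} = \mathbf{U}_A \boldsymbol{\Sigma}_A \mathbf{V}_A^{T}$ and $\mathbf{B} = \mathbf{U}_B \boldsymbol{\Sigma}_B \mathbf{V}_B^{T}$, one checks that
\begin{align*}
[\mathbf{A},\mathbf{B}] \;=\; [\mathbf{U}_A,\mathbf{U}_B]\,
\begin{bmatrix}\boldsymbol{\Sigma}_A & \mathbf{0} \\ \mathbf{0} & \boldsymbol{\Sigma}_B\end{bmatrix}\,
\begin{bmatrix}\mathbf{V}_A & \mathbf{0} \\ \mathbf{0} & \mathbf{V}_B\end{bmatrix}^{T}.
\end{align*}
The right factor always has orthonormal columns because of its block-diagonal structure. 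The left factor has orthonormal columns precisely when $\mathbf{U}_A^{T}\mathbf{U}_B = \mathbf{0}$, and this is exactly the assumption that the column spaces of $\mathbf{A}$ and $\mathbf{B}$ are orthogonal. Hence (after possibly reordering to sort singular values) this is a bona fide SVD of $[\mathbf{A},\mathbf{B}]$, whose singular values are the union of those of $\mathbf{A}$ and $\mathbf{B}$; summing them gives $\|[\mathbf{A},\mathbf{B}]\|_* = \|\mathbf{A}\|_* + \|\mathbf{B}\|_*$.

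I do not foresee a real obstacle here: the inequality is essentially one line once the zero-padding trick is noticed, and the equality case is an explicit SVD construction. The only subtle point worth stating carefully is that the middle block-diagonal matrix of singular values is not necessarily ordered, so the final factorization is an SVD only up to a permutation; this is harmless for computing the nuclear norm, but I would flag it in the writeup to avoid an objection.
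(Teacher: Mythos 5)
Your proof is correct, but it reaches the inequality by a genuinely different route than the paper. For the subadditivity, the paper invokes the variational factorization characterization $\|\mathbf{A}\|_* = \min_{\mathbf{A}=\mathbf{U}\mathbf{V}'} \tfrac{1}{2}(\|\mathbf{U}\|_F^2 + \|\mathbf{V}\|_F^2)$ and observes that concatenating the optimal factors of $\mathbf{A}$ and $\mathbf{B}$ yields a feasible (not necessarily optimal) factorization of $[\mathbf{A},\mathbf{B}]$, whence the bound; you instead write $[\mathbf{A},\mathbf{B}] = [\mathbf{A},\mathbf{0}] + [\mathbf{0},\mathbf{B}]$, apply the triangle inequality for the nuclear norm, and check that zero-padding preserves singular values. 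Your route is shorter and avoids the (nontrivial) variational lemma, at the cost of taking as given that the nuclear norm satisfies the triangle inequality --- a standard fact, but one whose usual proofs go through duality or the very characterization the paper uses, so the two arguments are comparably deep. For the equality case the two proofs are close in spirit: the paper forms the Gram matrix $[\mathbf{A},\mathbf{B}][\mathbf{A},\mathbf{B}]' = \mathbf{A}\mathbf{A}' + \mathbf{B}\mathbf{B}'$ and diagonalizes it using the orthogonality of $\mathbf{U}_{A1}$ and $\mathbf{U}_{B1}$, while you assemble an explicit (thin) SVD of $[\mathbf{A},\mathbf{B}]$ from those of $\mathbf{A}$ and $\mathbf{B}$; both show the singular values of the concatenation are the union of those of the blocks. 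Your direct construction is arguably cleaner, and your remark about the unordered block-diagonal middle factor is a worthwhile precision that the paper's version glosses over.
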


\begin{proof}
See Appendix~\ref{sec:nuclear_ineq}.
\end{proof}

Based on this result we have that
\begin{align} \label{nuclear_objval}
||\mathbf{T Y}_+||_* + ||\mathbf{T Y}_-||_* - ||\mathbf{T [Y_+, Y_-]}||_* \ge 0,
\end{align}
and the proposed objective function (\ref{nuclear_obj}) reaches the minimum $0$ if the column spaces of
two classes are orthogonal after applying the learned transformation $\mathbf{T}$; or equivalently,
(\ref{nuclear_obj}) reaches the minimum $0$ when the angle between subspaces of two classes is maximized after transformation, i.e., the smallest principal angle between subspaces equals $\frac{\pi}{2}$.

We now discuss the advantages of adopting the nuclear norm in (\ref{nuclear_obj}) over the rank function and other (popular) matrix norms, e.g., the induced 2-norm and the Frobenius norm.
When we replace the nuclear norm in (\ref{nuclear_obj}) with the rank function, the objective function reaches the minimum when subspaces are disjoint, but not necessarily maximally distant.
If we replace the nuclear norm in (\ref{nuclear_obj}) with the  induced 2-norm norm or the Frobenius norm,
as shown in Appendix~\ref{sec:propos},
the objective function is minimized at the trivial solution $\mathbf{T}=0$, which is prevented by the normalization condition $||\mathbf{T}||_2=1$.

Thus, we adopt the nuclear norm in (\ref{nuclear_obj}) for two major advantages that are not so favorable in the rank function or other (popular) matrix norms:
(a) The nuclear norm is the best convex approximation of the rank function \cite{rank-min}, which helps to reduce the variation within classes (first term in (\ref{nuclear_obj}));
(b) The objective function (\ref{nuclear_obj}) is in general optimized when the distance between subspaces of different classes is maximized after transformation, which helps to introduce separations between the classes.

\subsection{Synthetic Examples}

We now illustrate the properties of the above mentioned learned transformation $\mathbf{T}$
using synthetic examples in Fig.~\ref{fig:synthetic} (real-world examples are presented in Section~\ref{sec:expr}). We adopt a simple gradient descent optimization method (though other modern nuclear norm optimization techniques could be considered) to search for the transformation matrix T that minimizes (\ref{nuclear_obj}).
As shown in Fig.~\ref{fig:synthetic}, the learned transformation $\mathbf{T}$ via (\ref{nuclear_obj}) increases the inter-class subspace angle towards the maximum $\frac{\pi}{2}$, and
reduces intra-class subspace angle towards the minimum $0$.

\subsection{Transformation Learner Model for a Classification Tree}
\label{sec:tlm}

During training, at the $i$-th split node, we denote the arriving training samples as $\mathbf{Y}_i^+$ and $\mathbf{Y}_i^-$.
{ When more than two classes are present at a node, we randomly divide classes into two categories. This step is to purposely introduce node randomness to avoid duplicated trees as discussed later.}
 We then learn a transformation matrix $\mathbf{T}_i$ using (\ref{nuclear_obj}), and represent the
subspaces of $\mathbf{T}_i \mathbf{Y}_i^+$ and $\mathbf{T}_i \mathbf{Y}_i^-$ as $\mathbf{D}_i^+$ and $\mathbf{D}_i^-$ respectively.
The weak learner model at the $i$-th split node is now defined as
$\mathbf{\theta}_i=(\mathbf{T}_i, \mathbf{D}_i^+, \mathbf{D}_i^-)$.
{  During both training and testing}, at the $i$-th split node, each arriving sample $\mathbf{y}$ uses $\mathbf{T_i}\mathbf{y}$ as the feature, and is assigned to $\mathbf{D}_i^+$ or $\mathbf{D}_i^-$ that gives the smaller reconstruction error.

Various techniques are available to perform the above evaluation.
In our implementation,  we obtain
$\mathbf{D}_i^+$ and $\mathbf{D}_i^-$ using the K-SVD method \cite{Elad_KSVD}
 and denote a transformation learner as
$\mathbf{\theta}_i=(\mathbf{T}_i, \mathbf{D}_i^+(\mathbf{D}_i^+)^\dag, \mathbf{D}_i^-(\mathbf{D}_i^-)^\dag)$,
where
$\mathbf{D}^{\dagger}=(\mathbf{D}^\mathbf{T}\mathbf{D})^{-1}\mathbf{D}^\mathbf{T}.$
The split evaluation of a test sample $\mathbf{y}$, $|\mathbf{T_i}\mathbf{y}-\mathbf{D}_i^+(\mathbf{D}_i^+)^\dag\mathbf{T_i}\mathbf{y}|$, only involves matrix multiplication, which is of low computational complexity at the testing time.

{
Given a data point $\mathbf{y} \subseteq \mathbb{R}^d$, in this paper, we considered a square linear transformation $\mathbf{T}$ of size $d \times d$. Note that, if we learn a ``fat" linear transformation $\mathbf{T}$ of size $r \times d$, where $(r<d)$, we enable dimension reduction along with transformation to handle very high-dimensional data.
}

\subsection{Randomness in the Model: Transformation Forest}

During the training phase, we introduce randomness into the forests through a combination of random training set sampling and randomized node optimization. We train each classification tree on a different randomly selected training set. As discussed in \cite{RF2001, RFBook}, this reduces possible overfitting
and improves the generalization of classification forests, also significantly reducing the training time.
The randomized node optimization is achieved by randomly dividing classes arriving at each split node into two categories (given more than two classes), to obtain the training sets $\mathbf{Y_+}$ and $\mathbf{Y_-}$.
{ In (\ref{nuclear_obj}), we learn a transformation optimized for a two-class problem. This randomly class dividing strategy reduces a multi-class
problem into a two-class problem at each node for
transformation learning; furthermore, it introduces node
randomness to avoid generating duplicated trees.
}
Note that (\ref{nuclear_obj}) is non-convex and the employed gradient descent method converges to a local minimum.
Initializing the transformation $\mathbf{T}$ with different random matrices might lead to different local minimum solutions.
The identity matrix initialization of $\mathbf{T}$ in this paper leads to excellent performance,
however, understanding the node randomness introduced by adopting different initializations of  $\mathbf{T}$ is the subject of future research.

\section{Experimental Evaluation}
\label{sec:expr}

This section presents experimental evaluations using public datasets: the MNIST handwritten digit dataset, the Extended YaleB face dataset, and the 15-Scenes natural scene dataset.
 The MNIST dataset consists of 8-bit grayscale handwritten digit images of ``0"  through
``9" and 7000 examples for each class.
The Extended YaleB face dataset contains 38 subjects with near frontal pose under 64 lighting conditions (Fig.~\ref{fig:yaledata}).
All the images are resized to $16 \times 16$ { for the MNIST and the Extended YaleB datasets, which gives a 256-dimensional feature}.
The 15-Scenes dataset contains 4485 images falling into 15  natural scene categories (Fig.~\ref{fig:15scene}). The 15 categories include images of living rooms, kitchens, streets,  industrials, etc. We also present results for 3D data from the Kinect datatset in \cite{RF-online}.
{ We first compare many learners in a tree context
for accuracy and testing time; then we compare with
learners that are common for random forests.
}

\subsection{Illustrative Examples}

\begin{figure} [ht]
\centering
\includegraphics[angle=0, height=0.08\textwidth, width=.5\textwidth]{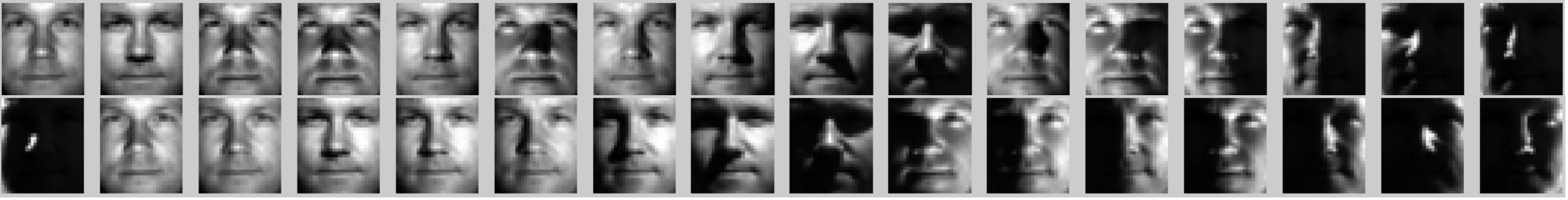}
\caption{Example illumination in the extended YaleB dataset.}
\label{fig:yaledata}
\end{figure}

\begin{figure} [ht]
\centering
\includegraphics[angle=0, height=0.24\textwidth, width=.5\textwidth]{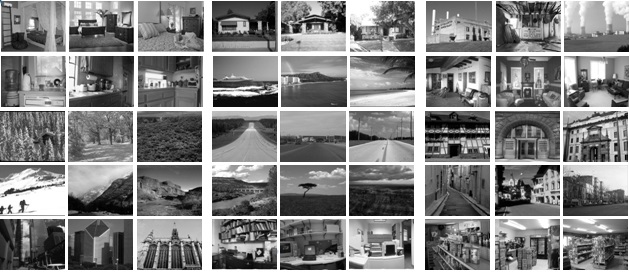}
\caption{15-Scenes natural scene dataset.}
\label{fig:15scene}
%\end{center}
\end{figure}

\begin{figure*} [ht]
\centering
 \subfloat[.] {\label{fig:subj38_Om01} \includegraphics[angle=0, height=0.13\textwidth, width=.2\textwidth]{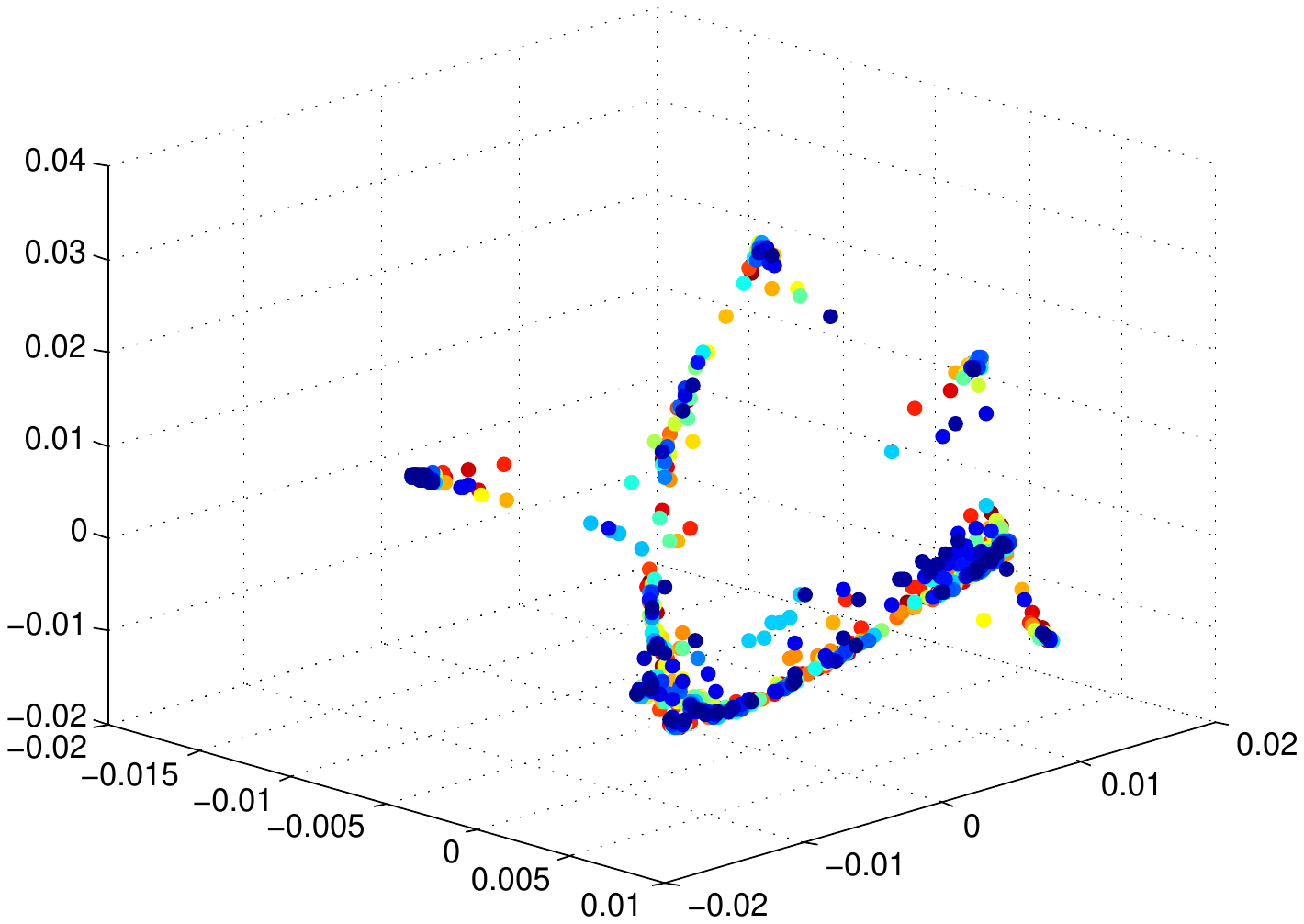} \hspace{0pt}}
  \subfloat[Original.] {\label{fig:subj38_Ob01} \includegraphics[angle=0, height=0.13\textwidth, width=.2\textwidth]{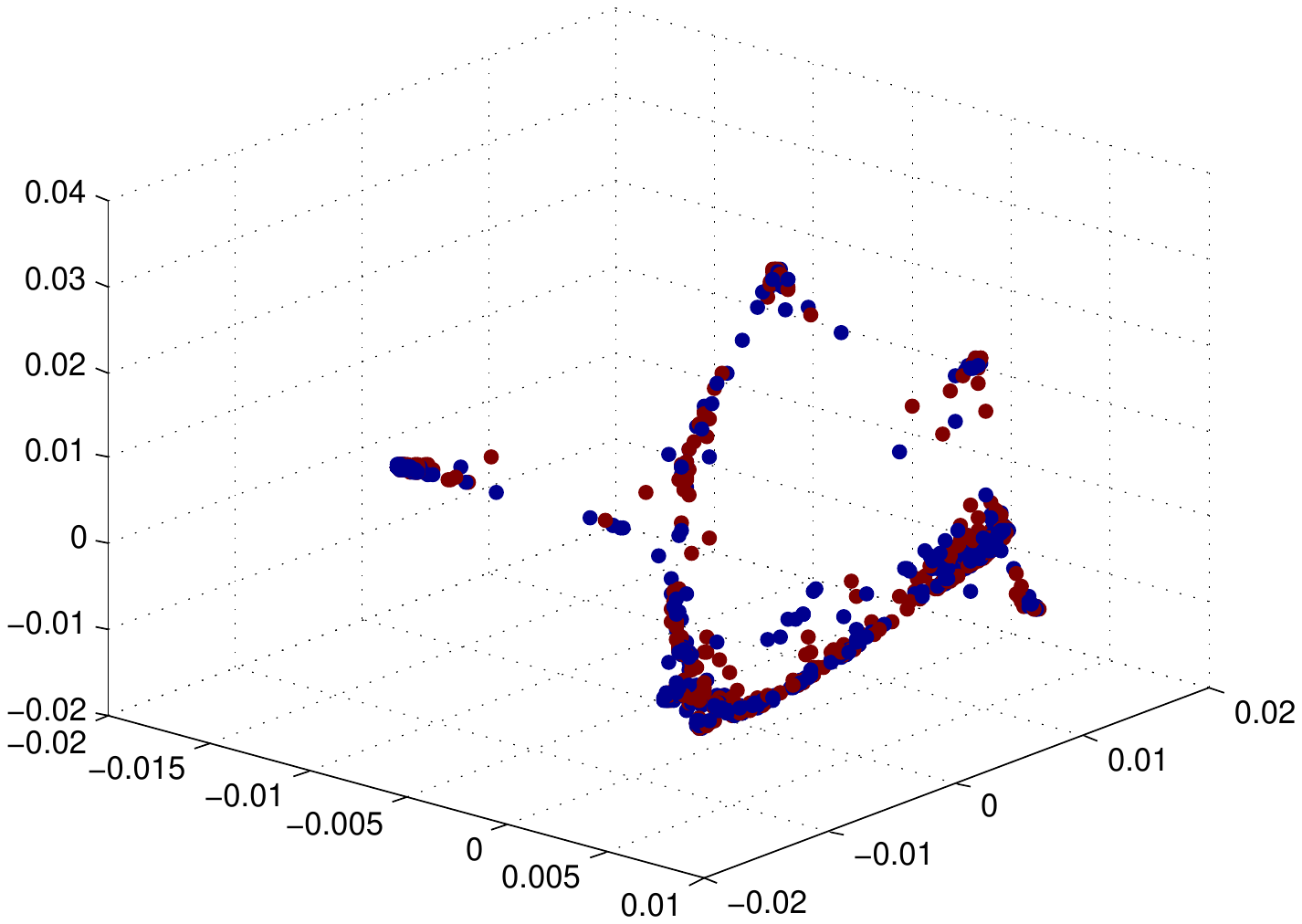}}
  \subfloat[Transformed.] {\label{fig:subj38_Tb01} \includegraphics[angle=0, height=0.13\textwidth, width=.2\textwidth]{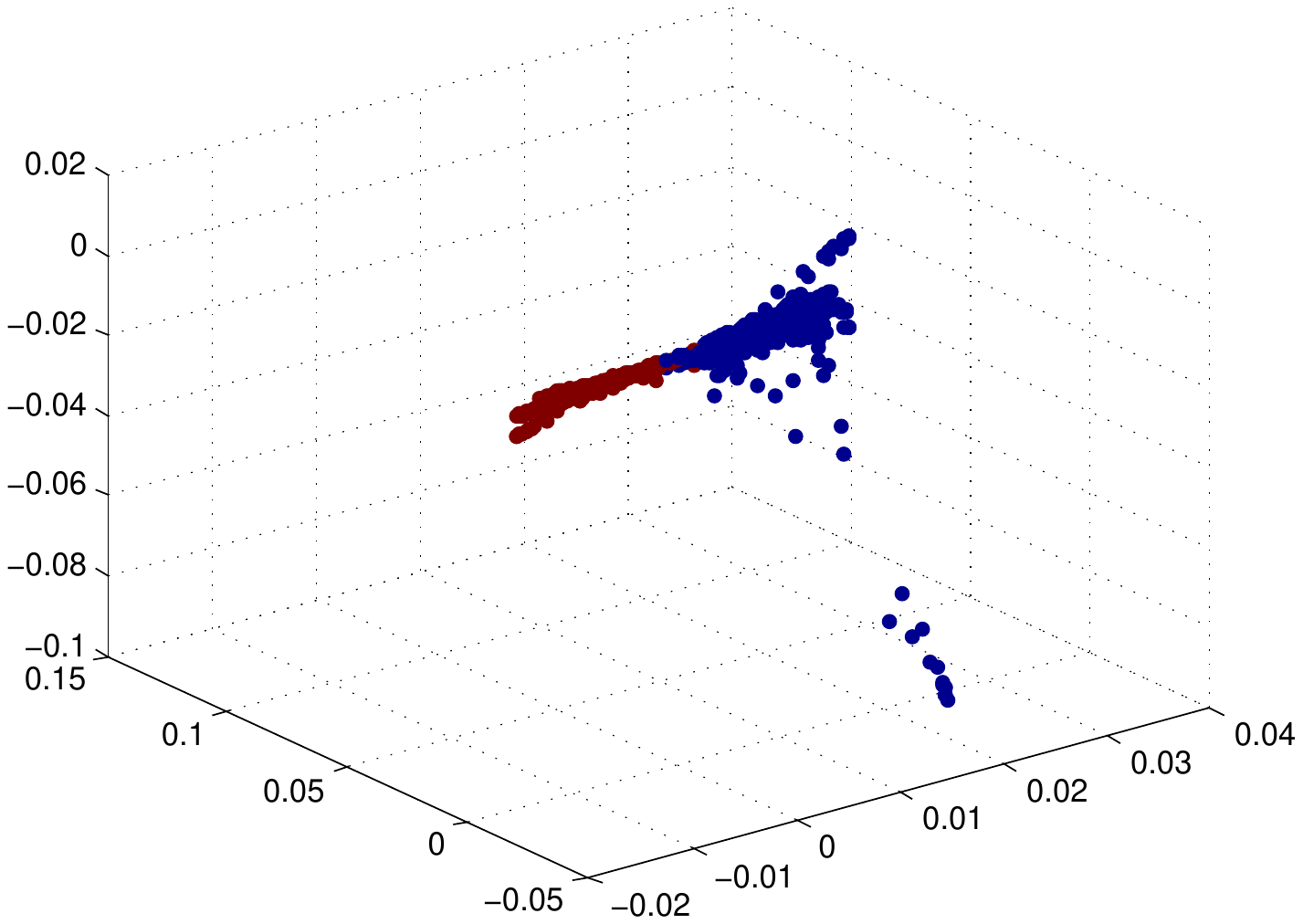}}
    \subfloat[.] {\label{fig:subj38_x01} \includegraphics[angle=0, height=0.13\textwidth, width=.23\textwidth]{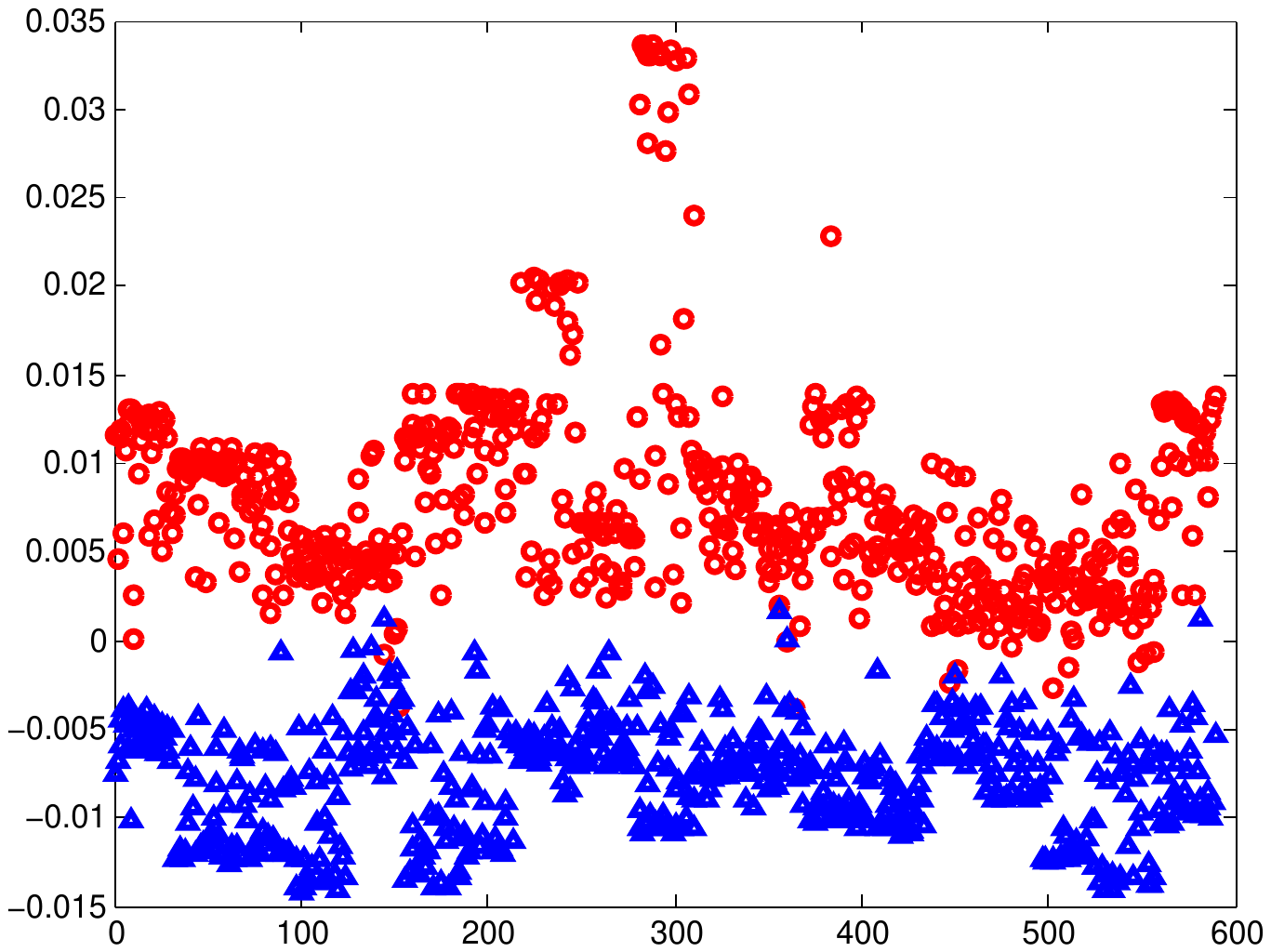}}\\

 \subfloat[.] {\label{fig:subj38_Om02} \includegraphics[angle=0, height=0.13\textwidth, width=.2\textwidth]{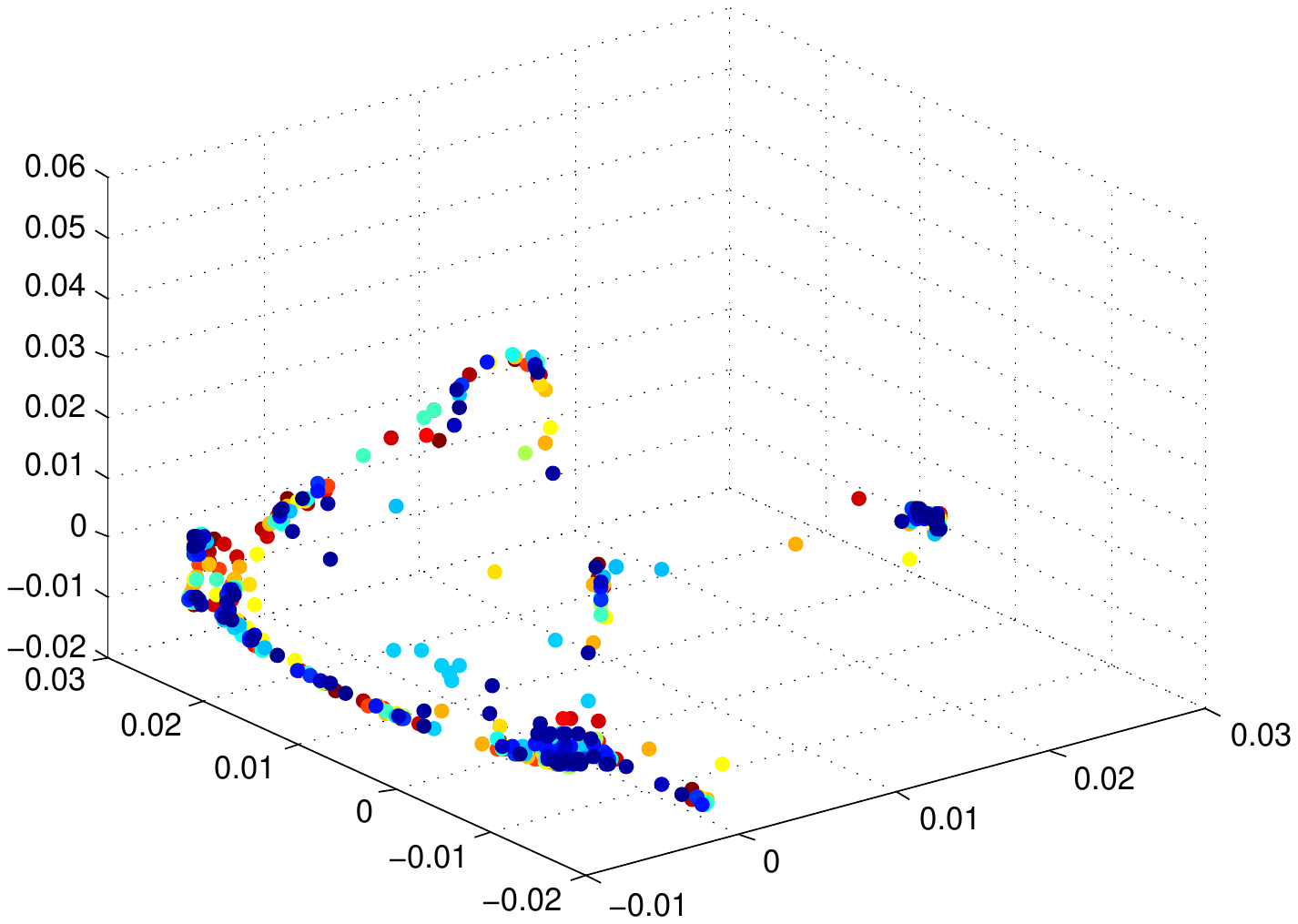} \hspace{0pt}}
  \subfloat[Original.] {\label{fig:subj38_Ob02} \includegraphics[angle=0, height=0.13\textwidth, width=.2\textwidth]{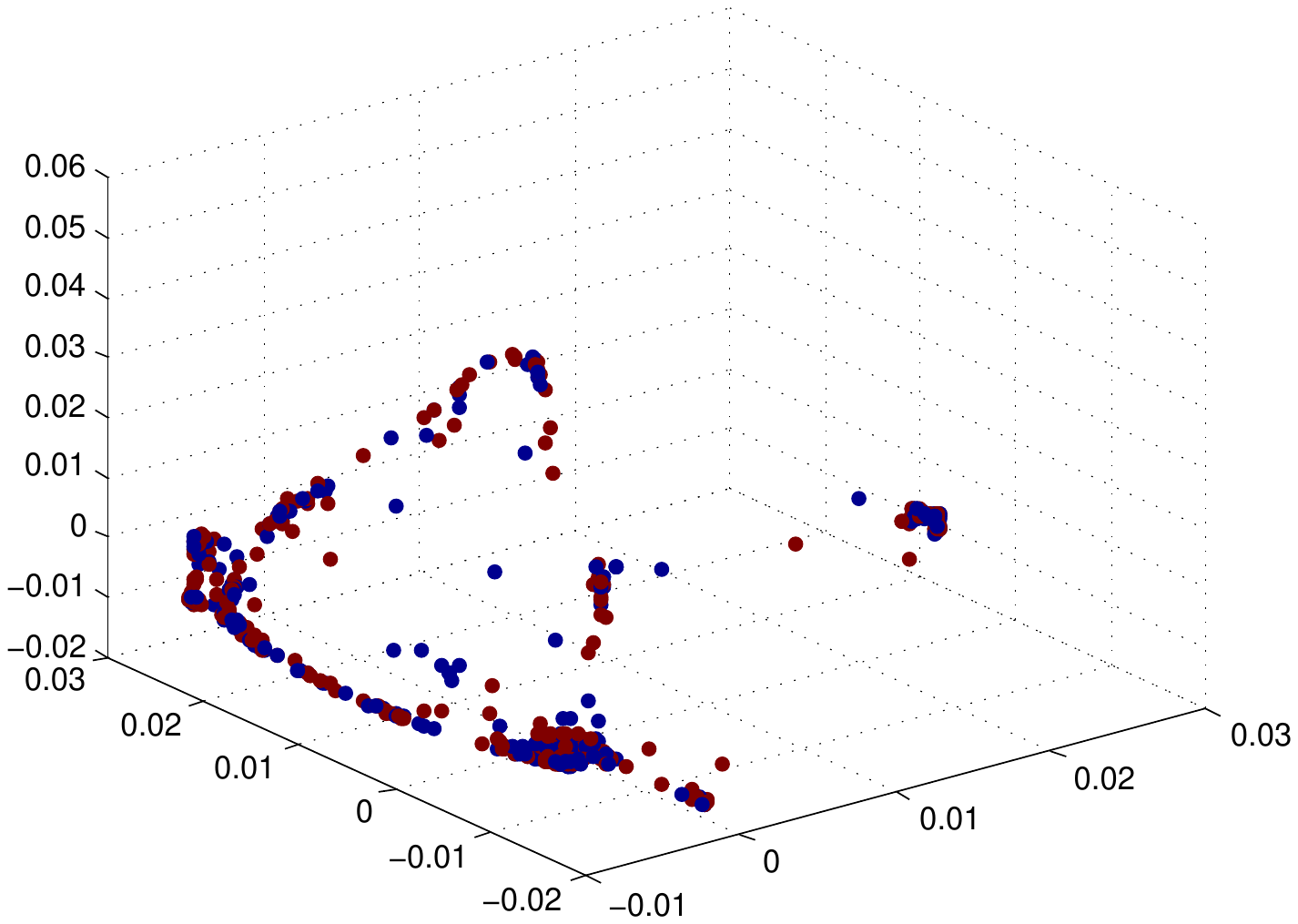}}
  \subfloat[Transformed.] {\label{fig:subj38_Tb02} \includegraphics[angle=0, height=0.13\textwidth, width=.2\textwidth]{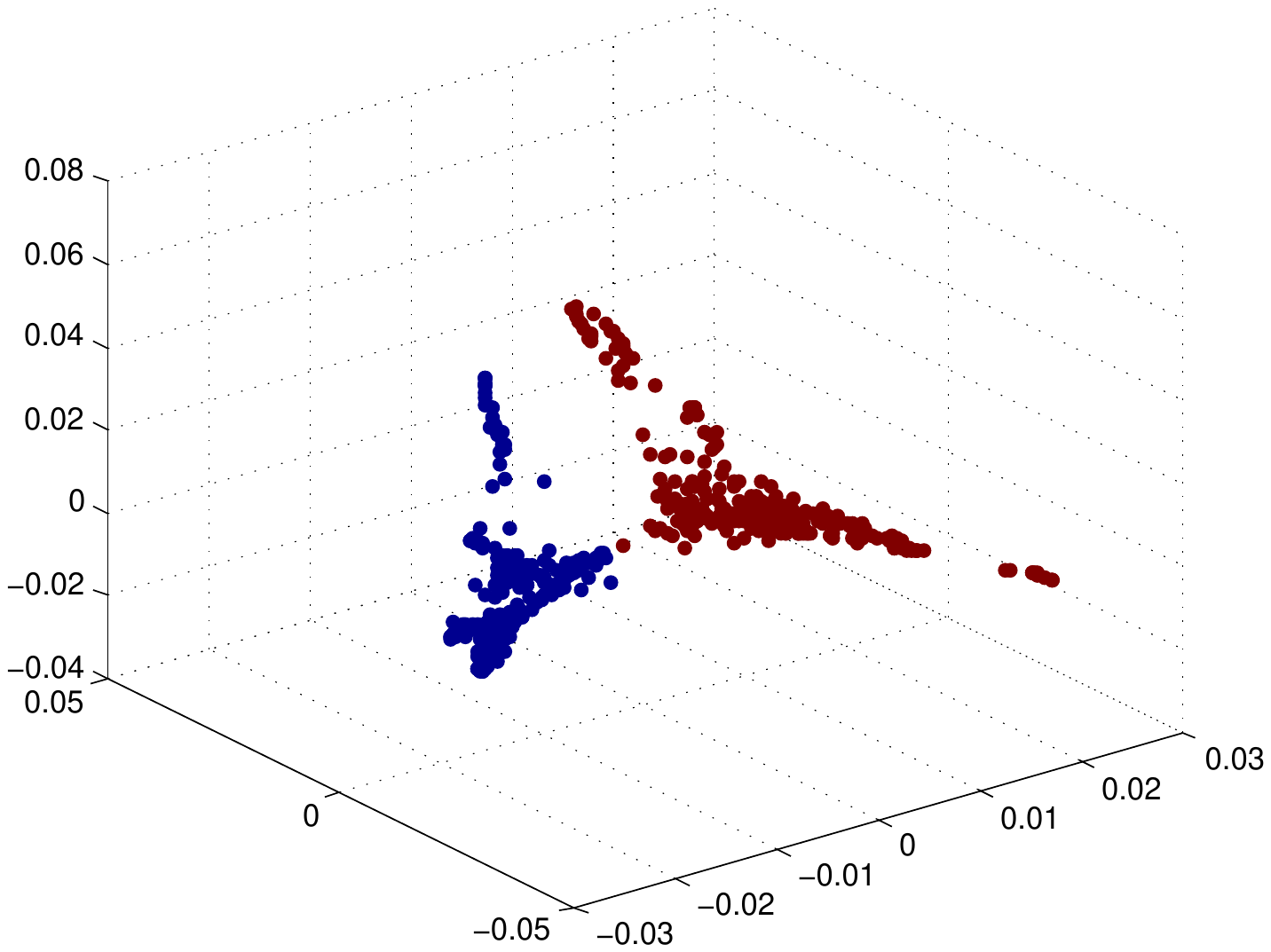}}
    \subfloat[.] {\label{fig:subj38_x02} \includegraphics[angle=0, height=0.13\textwidth, width=.23\textwidth]{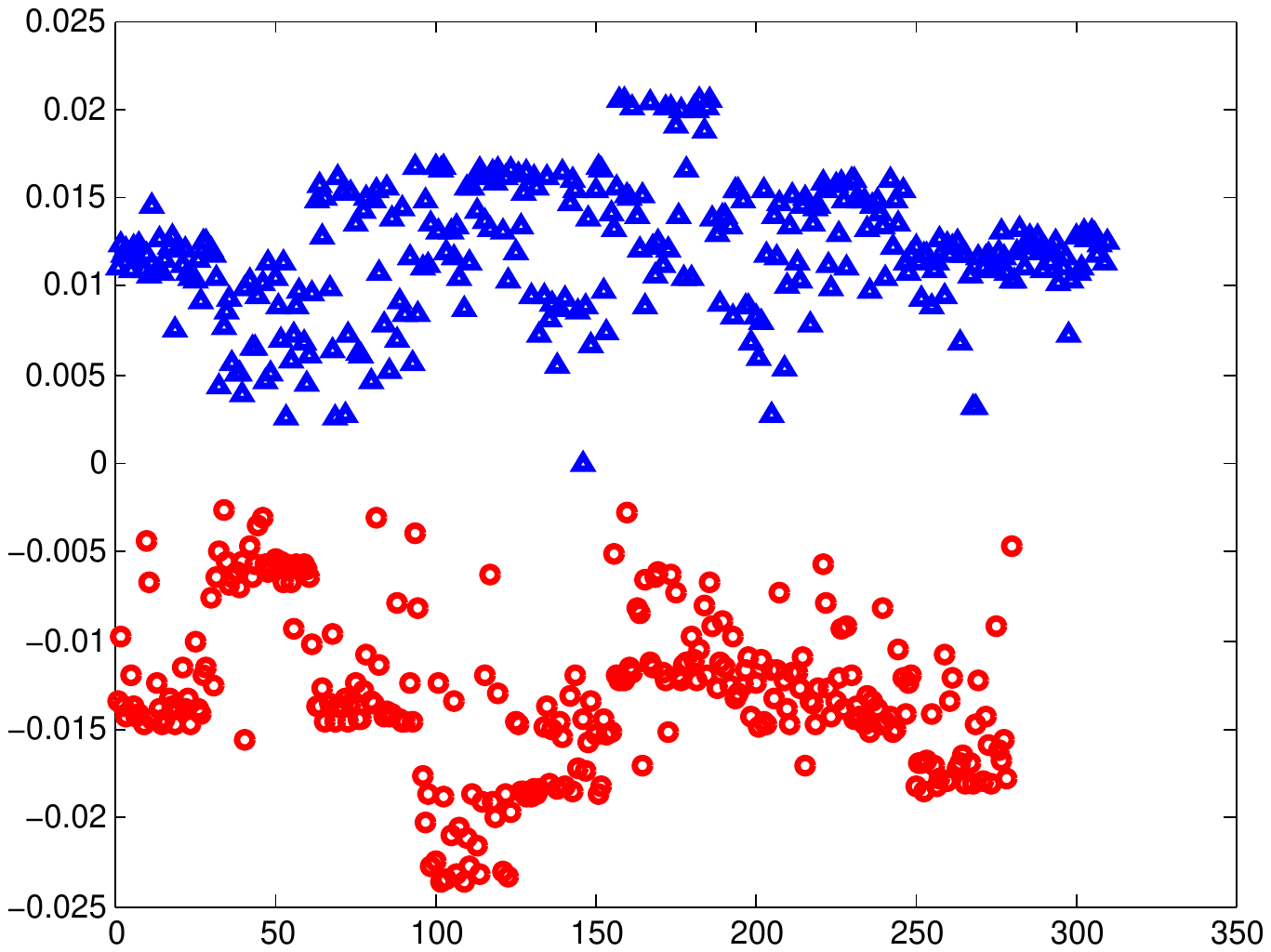}} \\

 \subfloat[.] {\label{fig:subj38_Om03} \includegraphics[angle=0, height=0.13\textwidth, width=.2\textwidth]{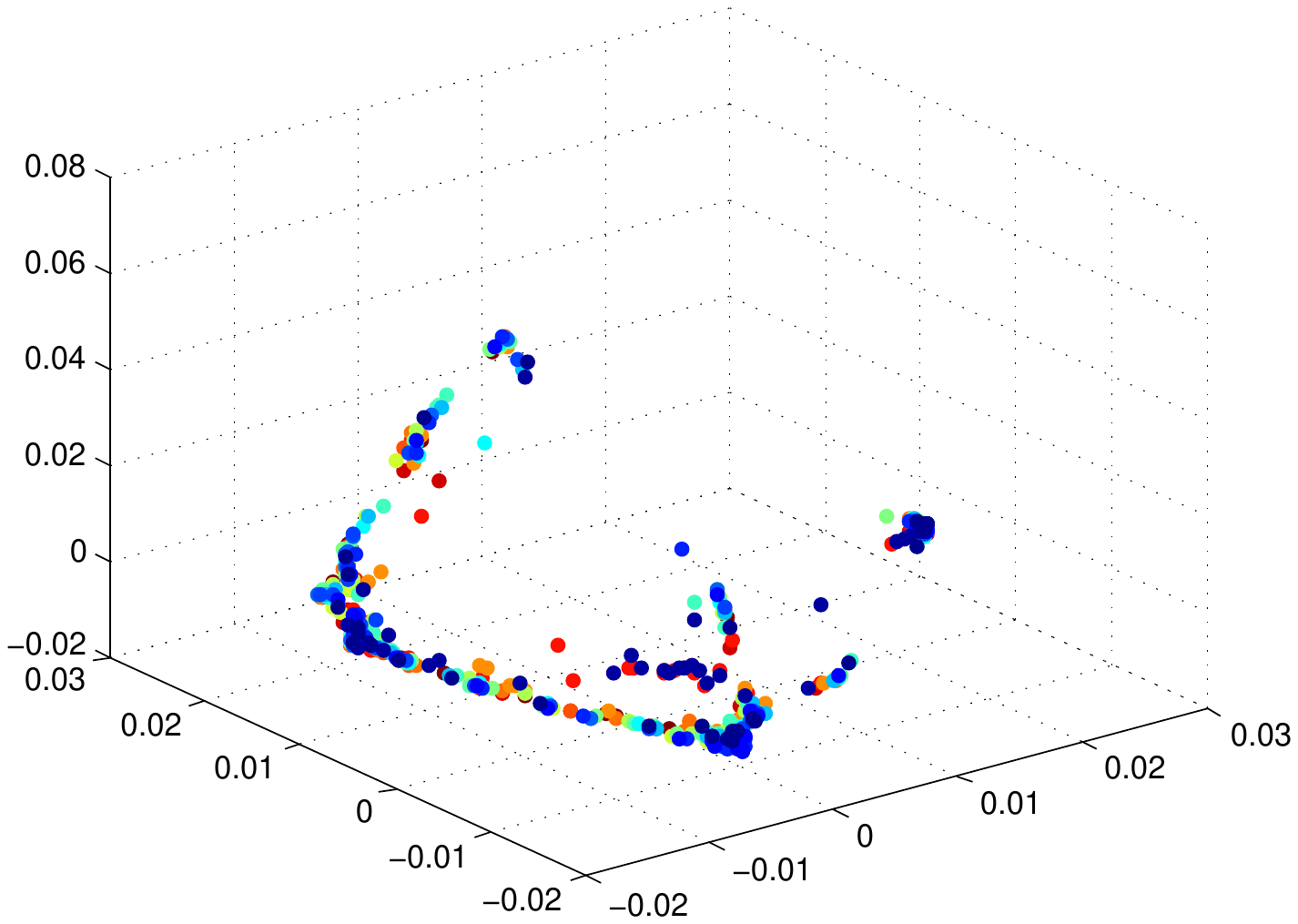} \hspace{0pt}}
  \subfloat[Original.] {\label{fig:subj38_Ob03} \includegraphics[angle=0, height=0.13\textwidth, width=.2\textwidth]{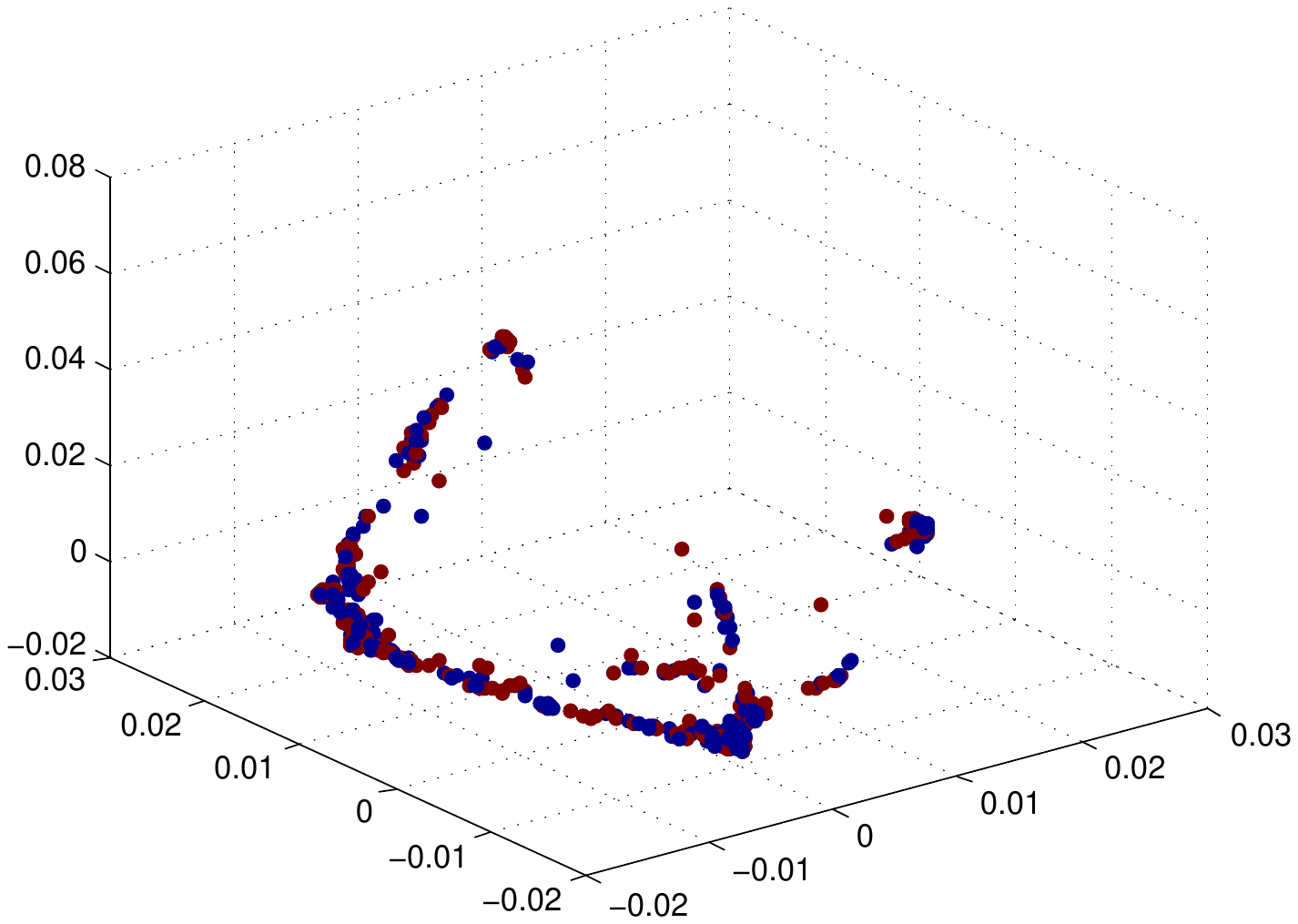}}
  \subfloat[Transformed.] {\label{fig:subj38_Tb03} \includegraphics[angle=0, height=0.13\textwidth, width=.2\textwidth]{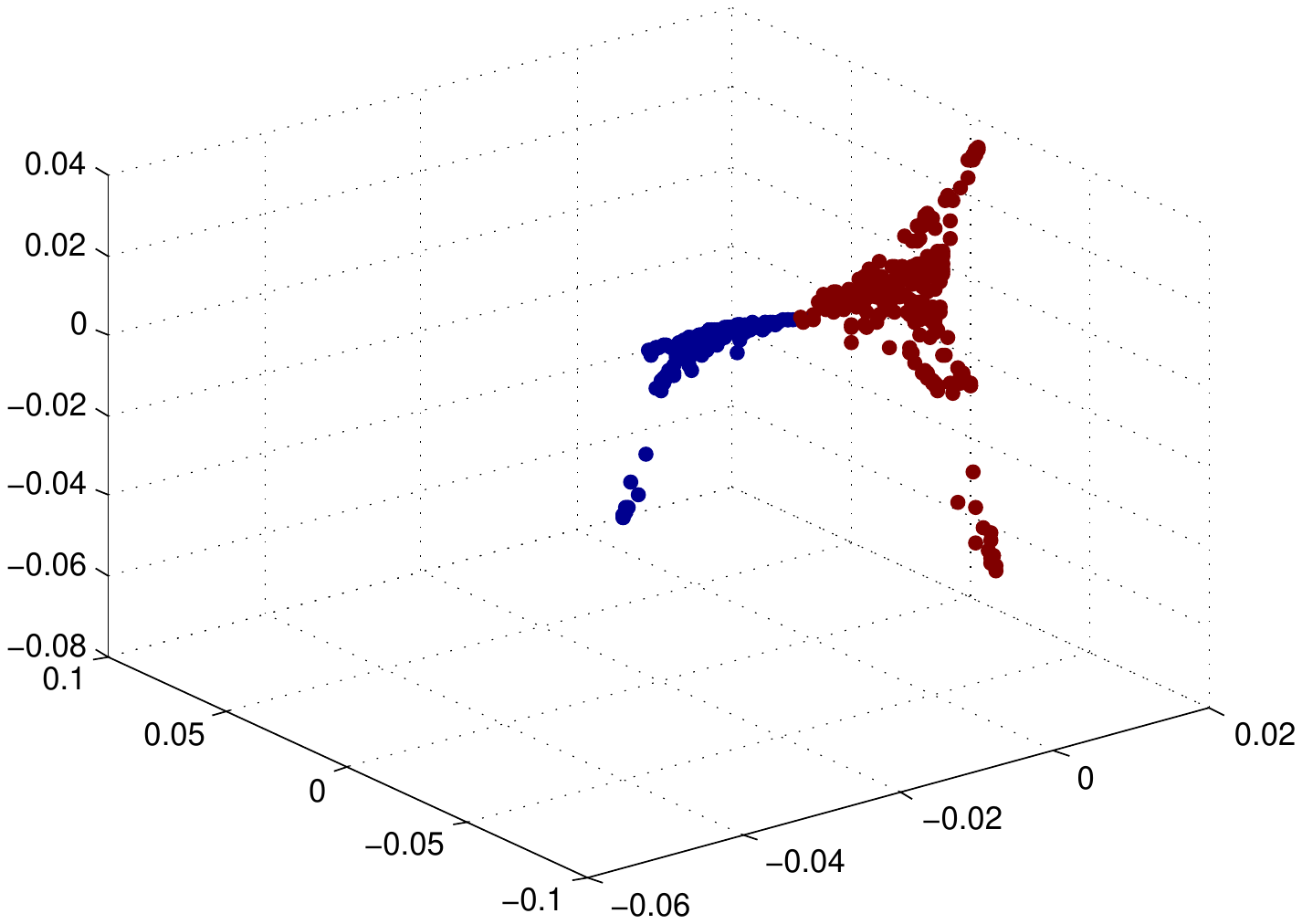}}
    \subfloat[.] {\label{fig:subj38_x03} \includegraphics[angle=0, height=0.13\textwidth, width=.23\textwidth]{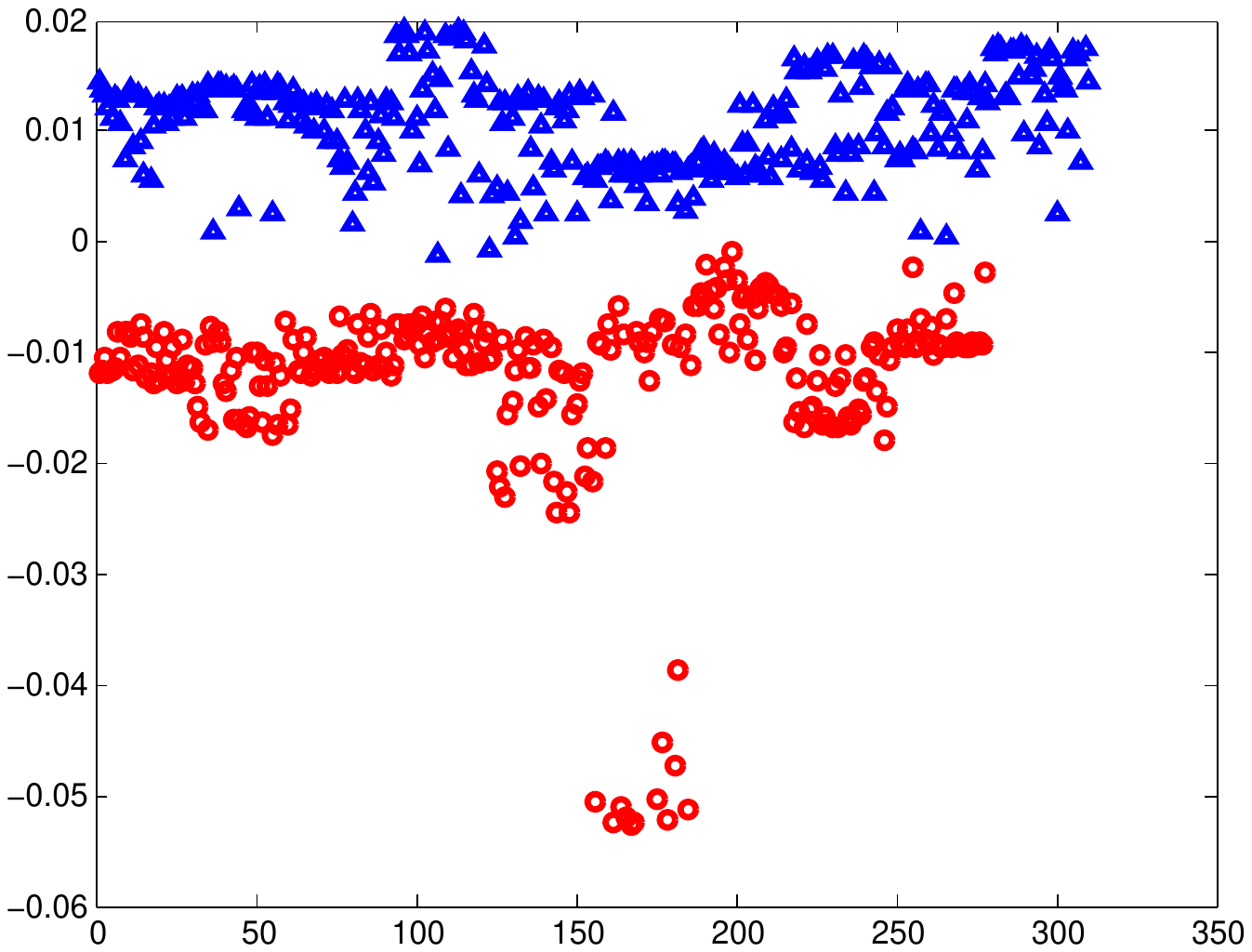}} \\
\caption{Transformation learners in a classification tree constructed on faces of 38 subjects.
The root split node is shown in the first row and its two child nodes are in the 2nd and 3rd rows.
The first column denotes training samples in the original subspaces, with different classes (subjects) in different colors.
 For visualization, the data are plotted with the dimension reduced to 3 using Laplacian Eigenmaps \cite{eigenmap}.
 As shown in the second column, we randomly divide arriving classes into two categories and learn a discriminative transformation using (\ref{nuclear_obj}).
The transformed samples are shown in the third column, clearly demonstrating how data in each class is concentrated while the different classes are separated.
The fourth column shows the first dimension of transformed samples in the third column.
}
\label{fig:38sub}
\end{figure*}

\begin{figure*} [ht]
\centering
 \subfloat[MNIST.] {\label{fig:digit10_acc} \includegraphics[angle=0, height=0.3\textwidth, width=.3\textwidth]{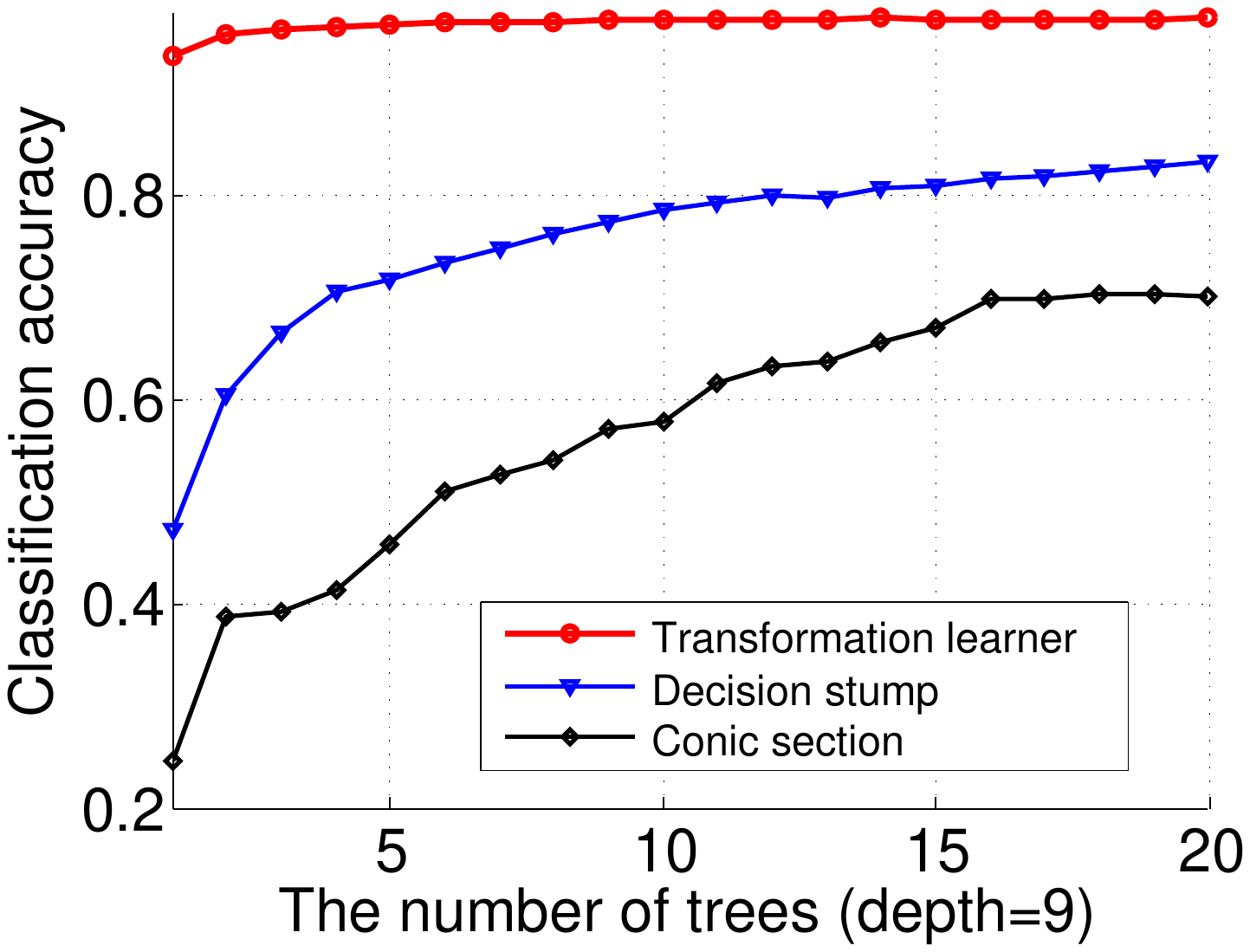} \hspace{0pt}}
  \subfloat[15-Scenes.] {\label{fig:15scene_acc} \includegraphics[angle=0, height=0.3\textwidth, width=.3\textwidth]{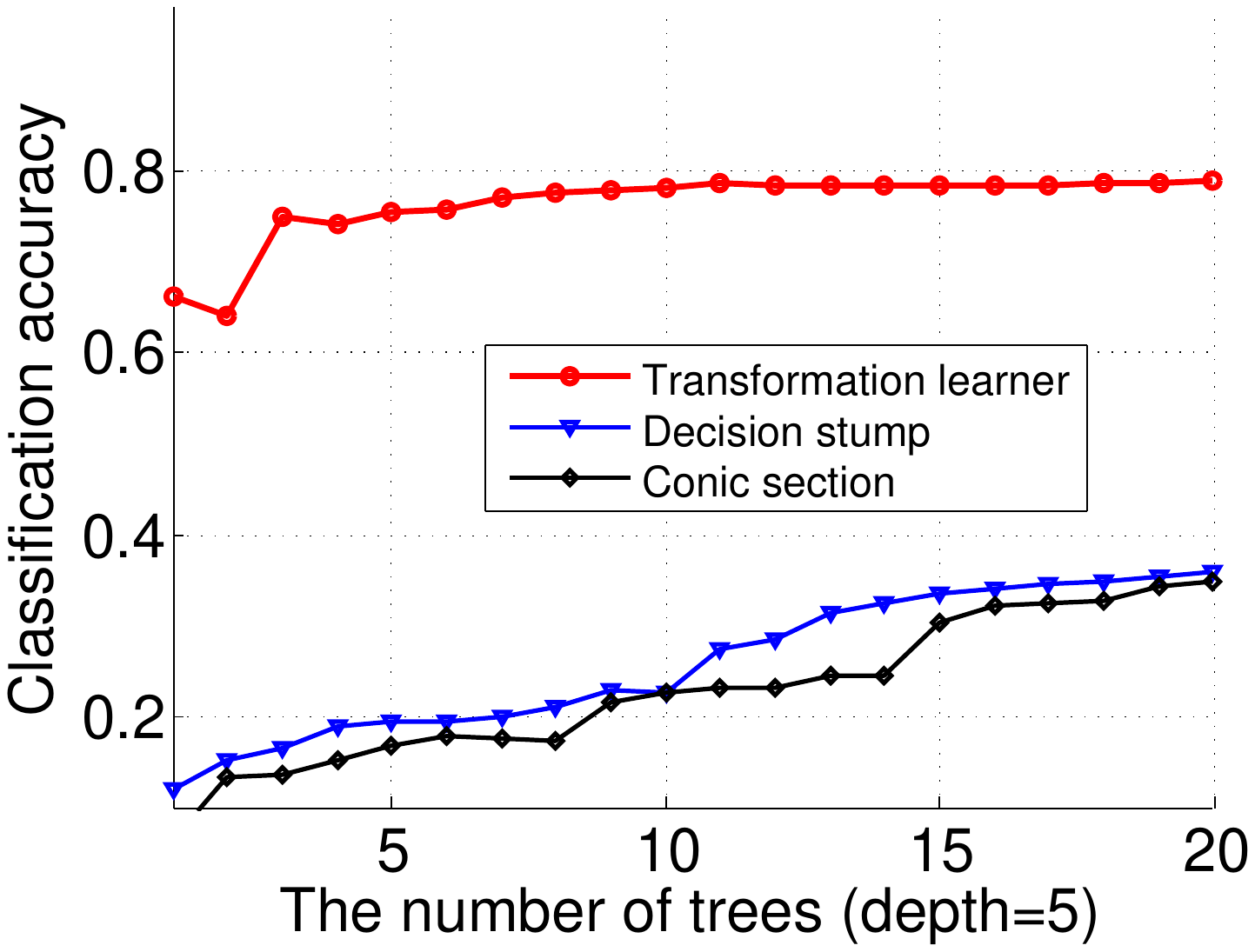}}
    \subfloat[Kinect.] {\label{fig:kinect_acc} \includegraphics[angle=0, height=0.3\textwidth, width=.3\textwidth]{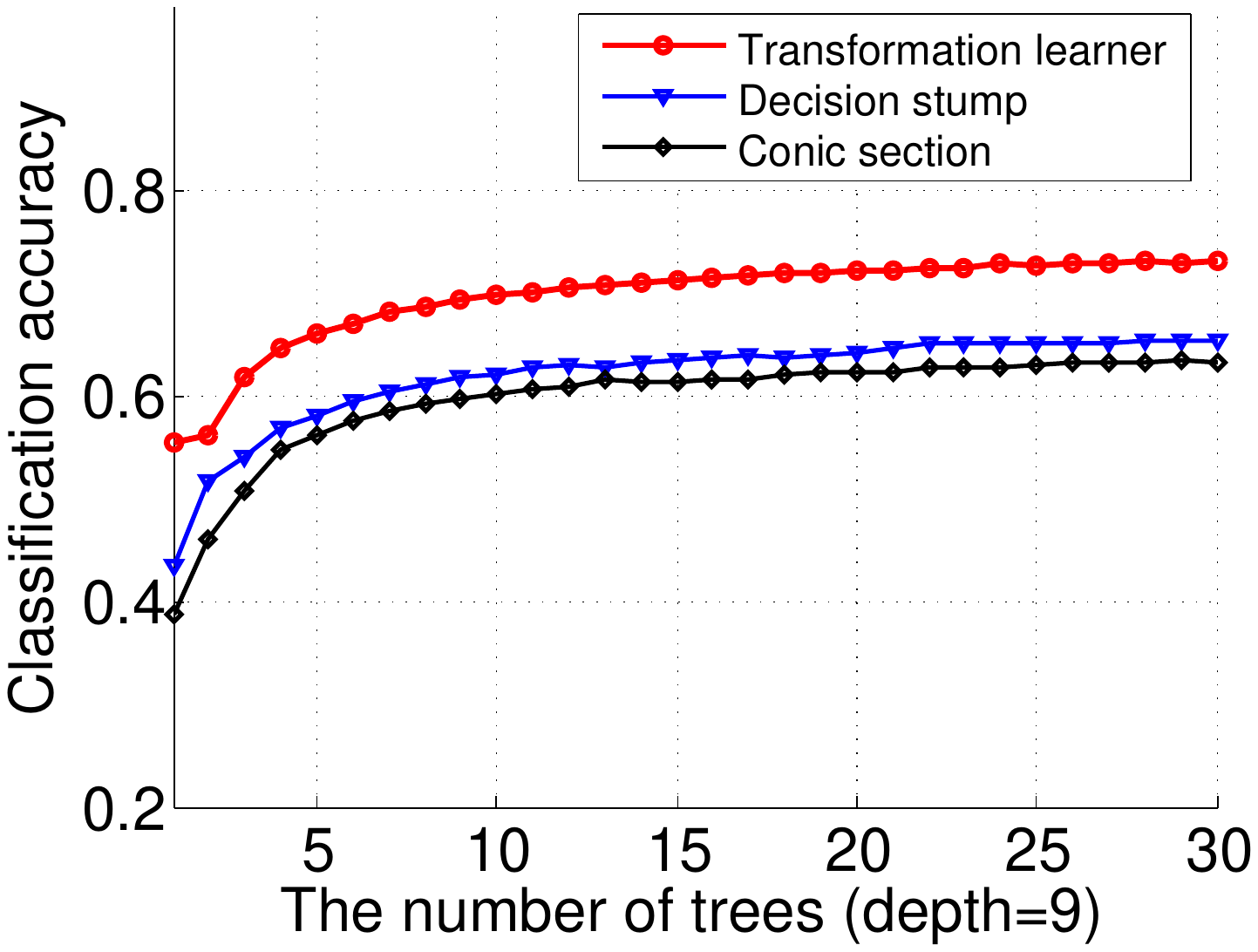}}
\caption{Classification accuracy using transformation learner forests.
}
\label{fig:treeacc}
\end{figure*}

{ We construct classification trees on the extended YaleB face dataset to compare different learners.}
 We split the dataset into two halves by randomly selecting 32 lighting conditions for training, and the other half for testing.
  Fig.~\ref{fig:38sub} illustrates the proposed transformation learner model in a classification tree constructed on faces of all 38 subjects.
The third column shows that transformation learners at each split node enforce separation between two randomly selected categories, and clearly demonstrates how data in each class is concentrated while the different classes are separated.

\begin{table}[ht]
%\begin{center}
\centering
	\caption{ Classification accuracies (\%) and testing time for the Extended YaleB dataset using classification trees with different learners.}
{\small
	\begin{tabular}{|l|l|l|}
	\hline
Method & Accuracy & { Testing } \\
 & (\%) & { time (s) }\\
	\hline
 \hline
 Non-tree based methods & & \\
  \hline
D-KSVD \cite{Zhang10} & 94.10 &- \\
LC-KSVD \cite{lcksvd} & 96.70 &-\\
SRC \cite{Wright09} & 97.20 &-\\
\hline
\hline
Classification trees & &\\
  \hline
Decision stump (1 tree) & 28.37 & 0.09\\
Decision stump (100 trees)& 91.77 & 13.62\\
Conic section (1 tree) & 8.55 & 0.05\\
Conic section (100 trees)  & 78.20 & 5.04\\
C4.5 (1 tree) \cite{c4.5} & 39.14 & 0.21\\
LDA  (1 tree) & 38.32 & 0.12\\
LDA  (100 trees) & 94.98 & 7.01\\
SVM (1 tree) & 95.23 & 1.62\\
Identity learner (1 tree)& 84.95 & 0.29\\
Transformation learner (1 tree)& \textbf{98.77} & 0.15\\
\hline
	\end{tabular}
}	
	\label{tab:yaleacc}
\end{table}

{  A maximum tree depth is typically specified for random forests to limit the size of a tree \cite{RFBook}, which is different from algorithms like C4.5 \cite{c4.5} that grow the tree only relying on termination criterion. The tree depth in this paper is the maximum tree depth. To avoid
under/over-fitting, we choose the maximum tree depth through a validation process. We also
implement additional termination criteria to prevent further training of a branch, e.g., the
number of samples arriving at a node.
}

In Table~\ref{tab:yaleacc}, we construct classification trees with a maximum depth of 9 using different learners ({  no maximum depth is defined for the C4.5 tree.}).  For reference purpose, we also include the performance of several subspace learning methods,
%D-KSVD \cite{Zhang10}, LC-KSVD \cite{lcksvd}, and SRC \cite{Wright09},
which provide state-of-the-art classification accuracies on this dataset.
Using a single classification tree, the proposed transformation learner already significantly outperforms the popular weak learners \emph{decision stump} and \emph{conic section} \cite{RFBook}, where 100 trees are used (30 tries are adopted here).
We observe that the proposed learner also outperforms more complex split functions SVM and LDA.
The identity learner denotes the proposed framework but replacing the learned transformation with the identity matrix.
Using a single tree, the proposed approach already outperforms state-of-the-art results reported on this dataset. As shown later, with randomness introduced, the performance in general increases further by employing more trees.

{  While our learner has higher complexity compared to weak learners like decision stump, the performance for random forests is judged by the accuracy and test time. Increasing the number of trees (sublinearly) increases accuracy, at the cost of (linearly) increased test time \cite{RFBook}.
 As shown in Table~\ref{tab:yaleacc}, our learner exhibits similar test time as other weaker learners, but with significantly improved accuracy. By increasing the number of trees, other learners may approach our accuracy but at the cost of orders of magnitude more test time.  Thus, the fact that 1-2 orders of magnitude less trees with our learned matrix outperforms standard random forests illustrates the importance of the proposed general transform learning framework. }

\subsection{Randomized Trees}

%\begin{figure*} [ht]
%\centering
% \subfloat[MNIST.] {\label{fig:digit10_acc} \includegraphics[angle=0, height=0.3\textwidth, width=.3\textwidth]{digit10_acc.pdf} \hspace{0pt}}
%  \subfloat[15-Scenes.] {\label{fig:15scene_acc} \includegraphics[angle=0, height=0.3\textwidth, width=.3\textwidth]{15scene_acc.pdf}}
%    \subfloat[Kinect.] {\label{fig:kinect_acc} \includegraphics[angle=0, height=0.3\textwidth, width=.3\textwidth]{kinect_acc30.pdf}}
%\caption{Classification accuracy using transformation learner forests.
%}
%\label{fig:treeacc}
%\end{figure*}

\begin{figure*} [ht]
\centering
 \subfloat[.] {\label{fig:digit10_Om01} \includegraphics[angle=0, height=0.13\textwidth, width=.2\textwidth]{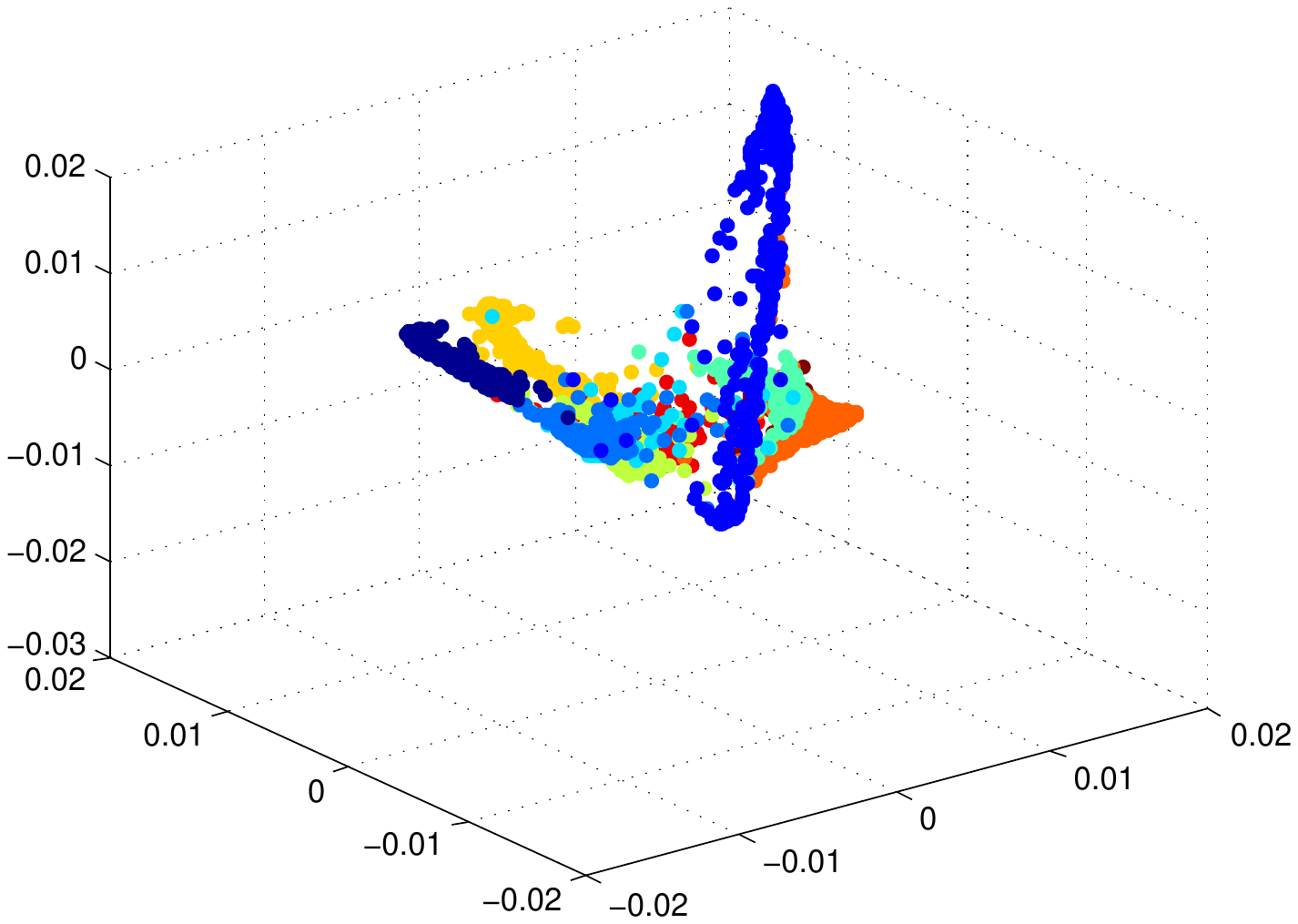} \hspace{0pt}}
  \subfloat[Original.] {\label{fig:digit10_Ob01} \includegraphics[angle=0, height=0.13\textwidth, width=.2\textwidth]{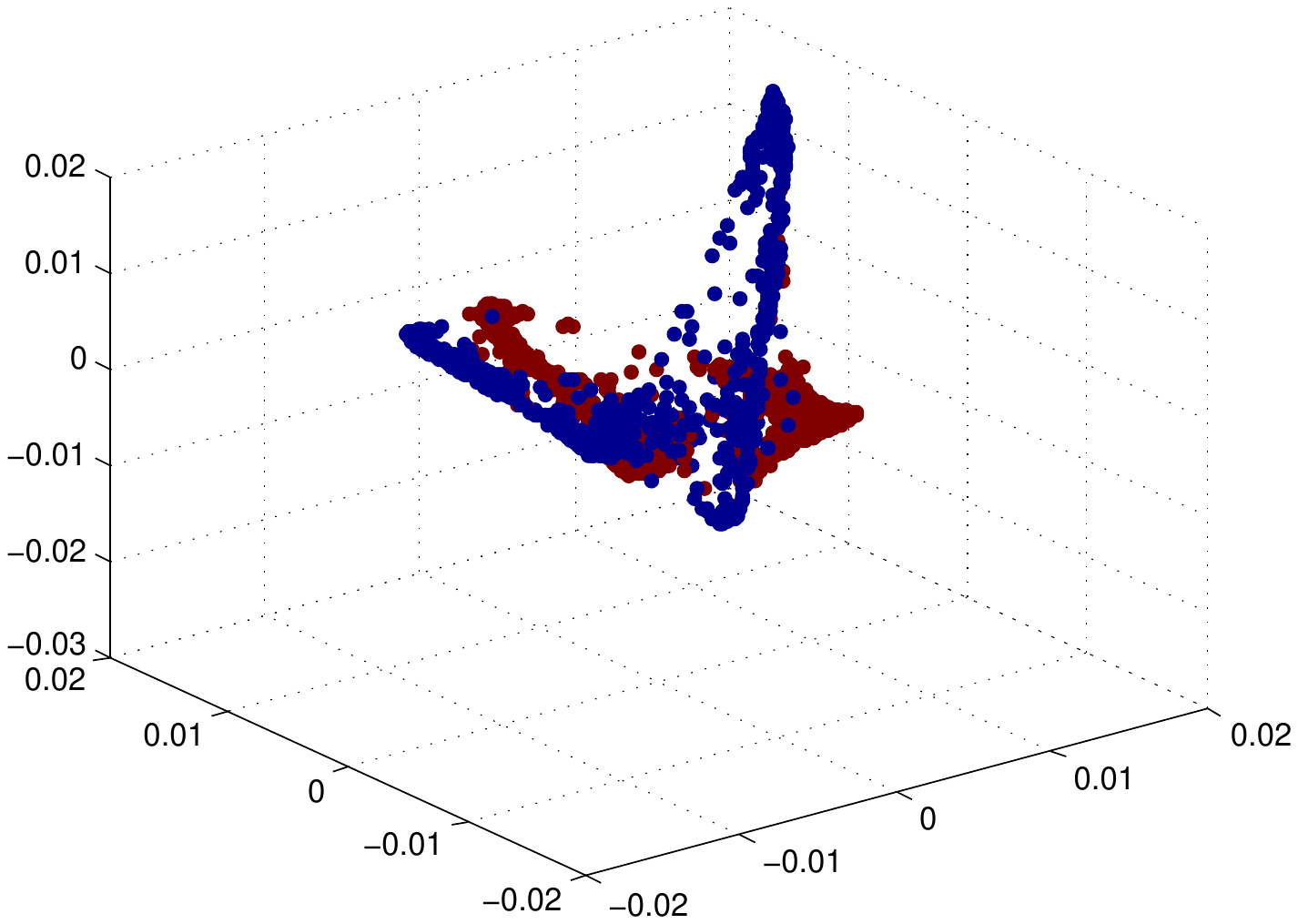}}
  \subfloat[Transformed.] {\label{fig:digit10_Tb01} \includegraphics[angle=0, height=0.13\textwidth, width=.2\textwidth]{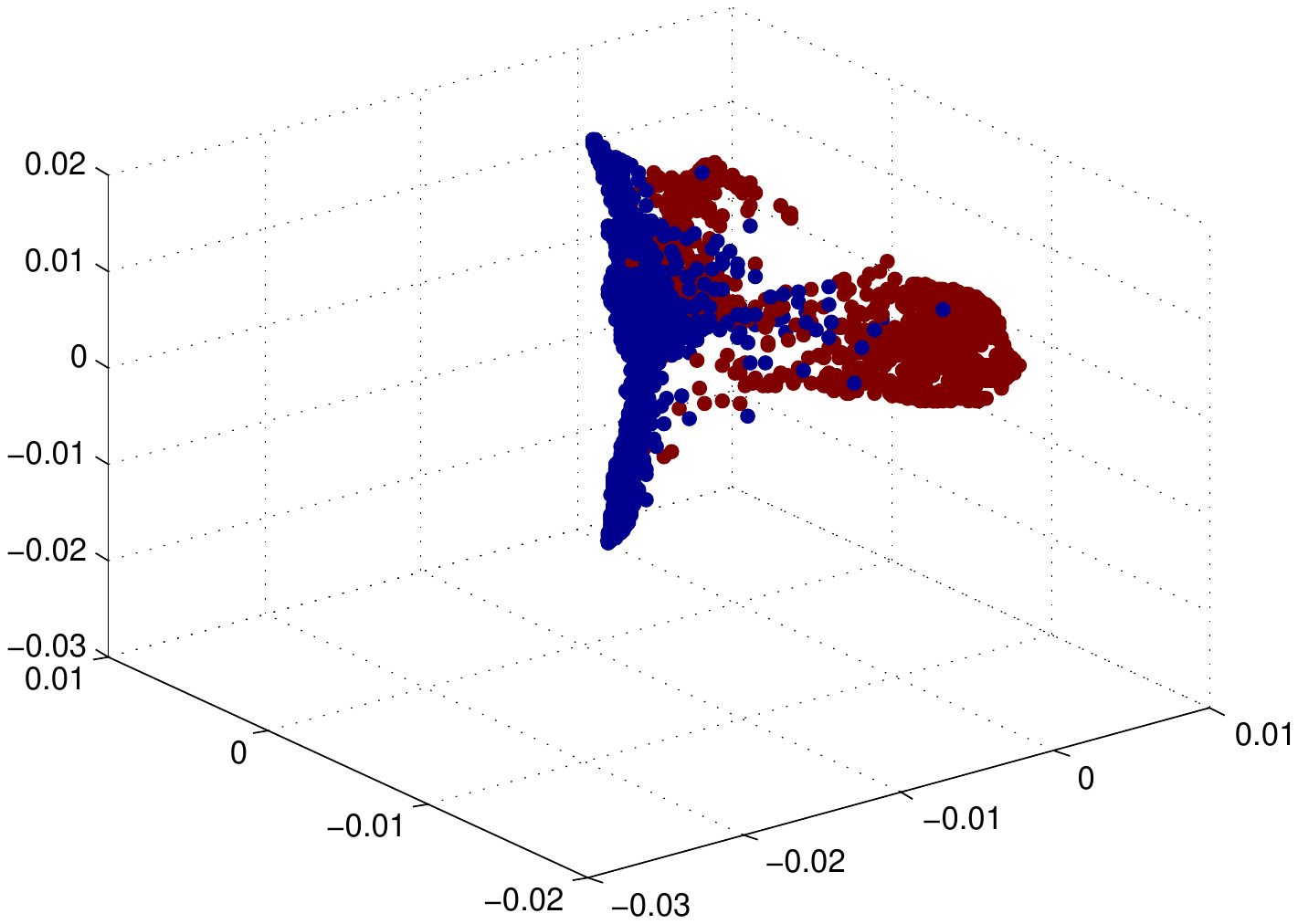}}
    \subfloat[.] {\label{fig:digit10_Tx01} \includegraphics[angle=0, height=0.13\textwidth, width=.23\textwidth]{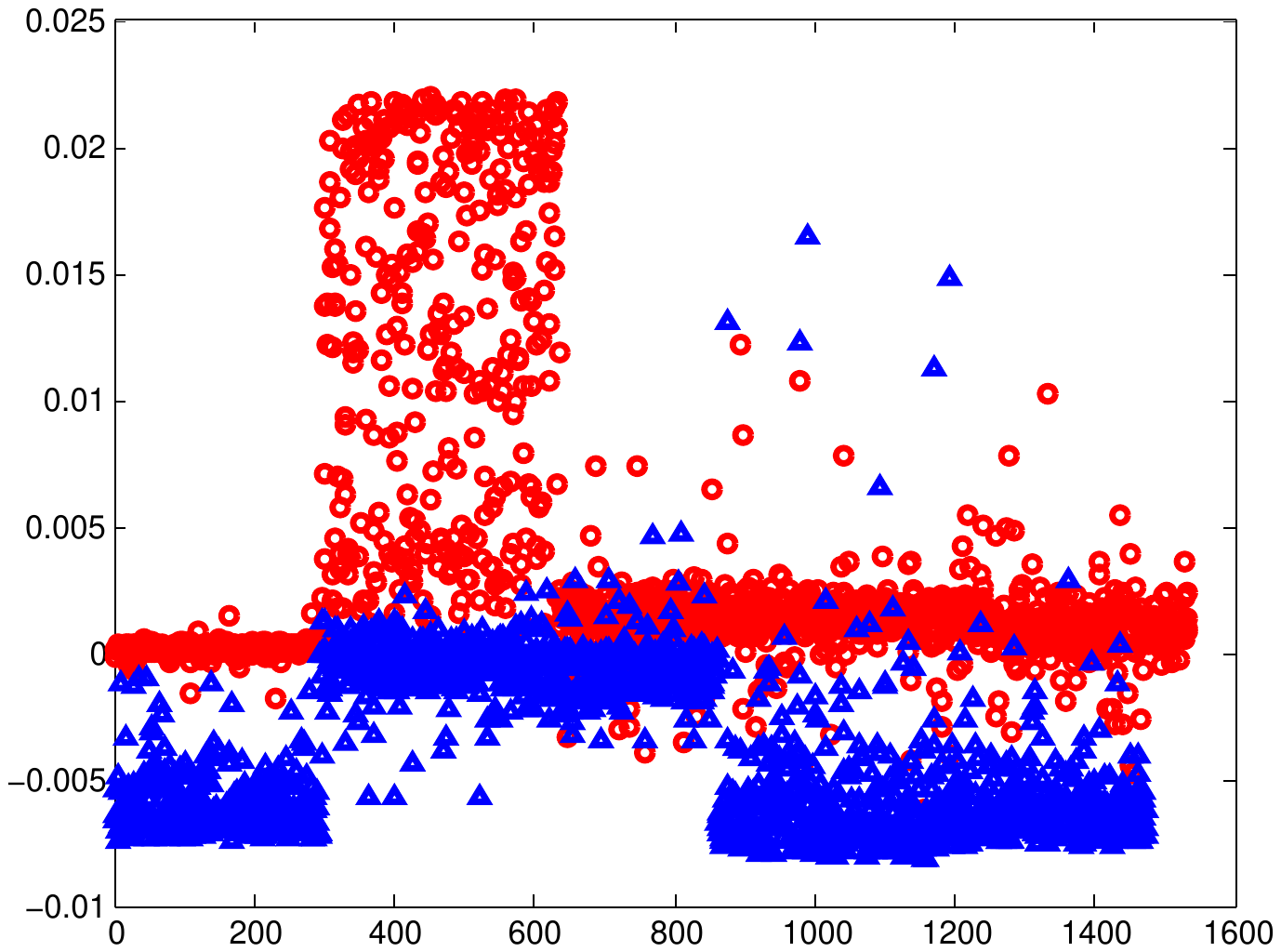}} \\

 \subfloat[.] {\label{fig:digit10_Om02} \includegraphics[angle=0, height=0.13\textwidth, width=.2\textwidth]{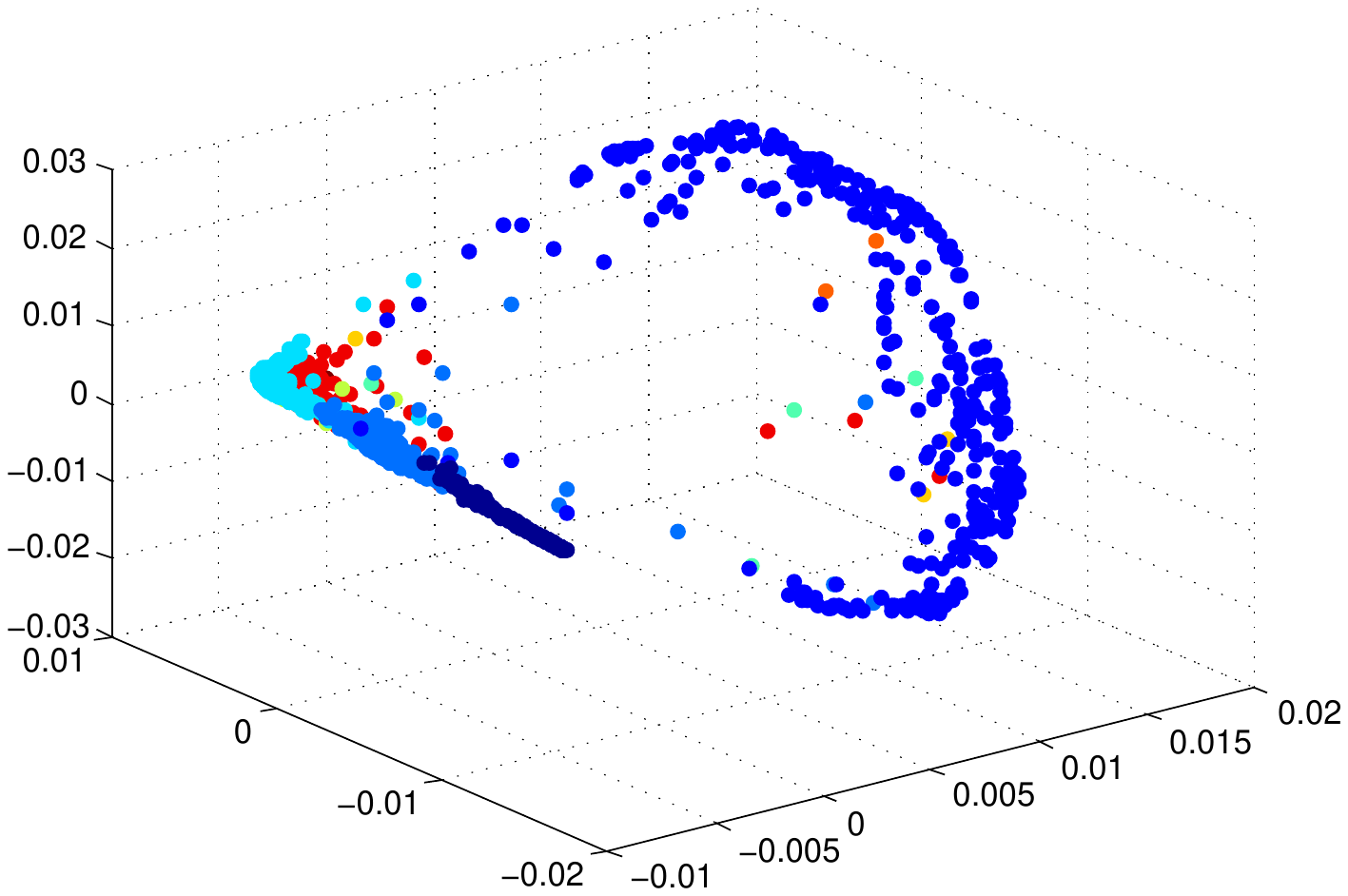} \hspace{0pt}}
  \subfloat[Original.] {\label{fig:subj38_Ob02} \includegraphics[angle=0, height=0.13\textwidth, width=.2\textwidth]{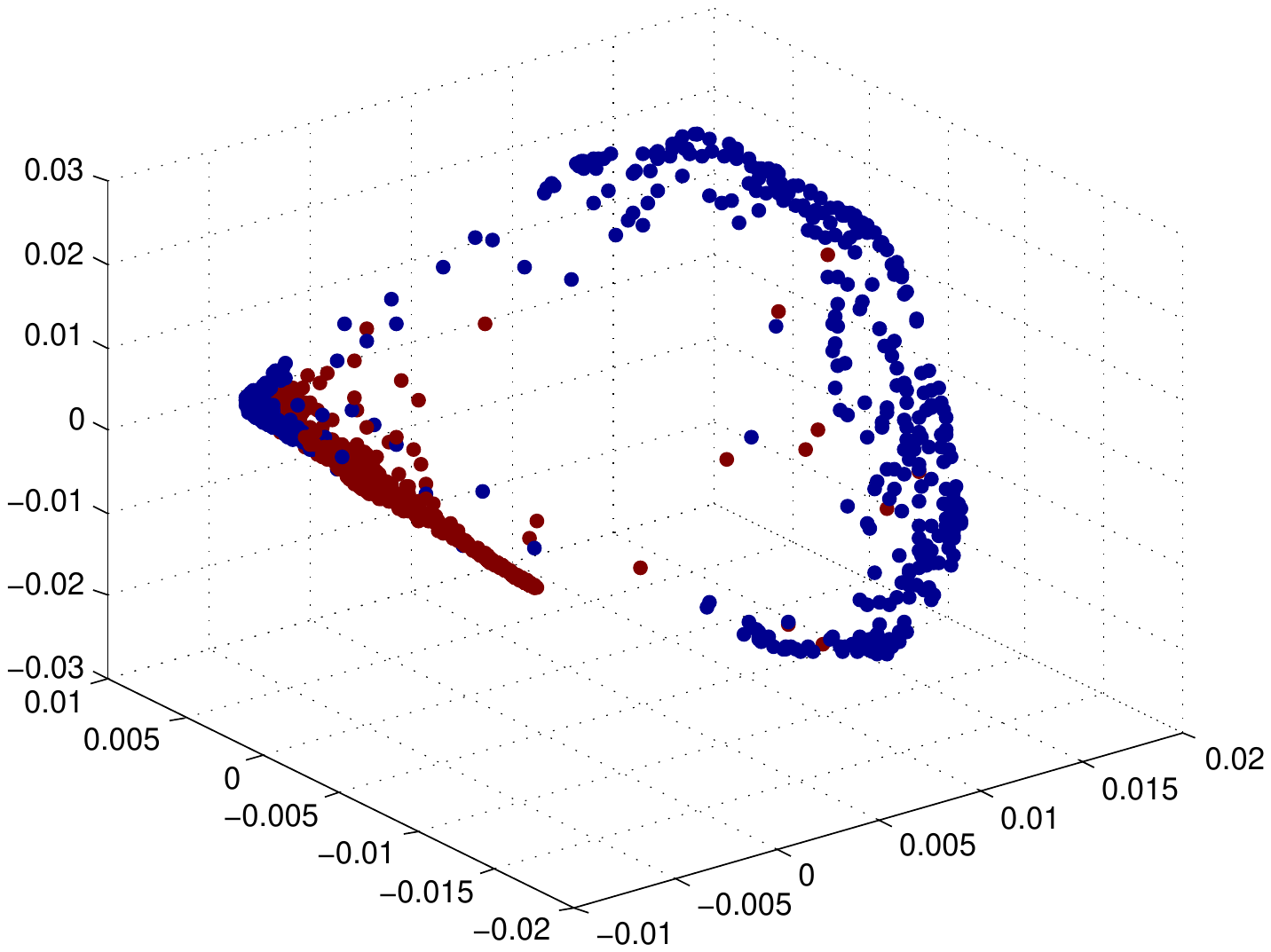}}
  \subfloat[Transformed.] {\label{fig:digit10_Tb02} \includegraphics[angle=0, height=0.13\textwidth, width=.2\textwidth]{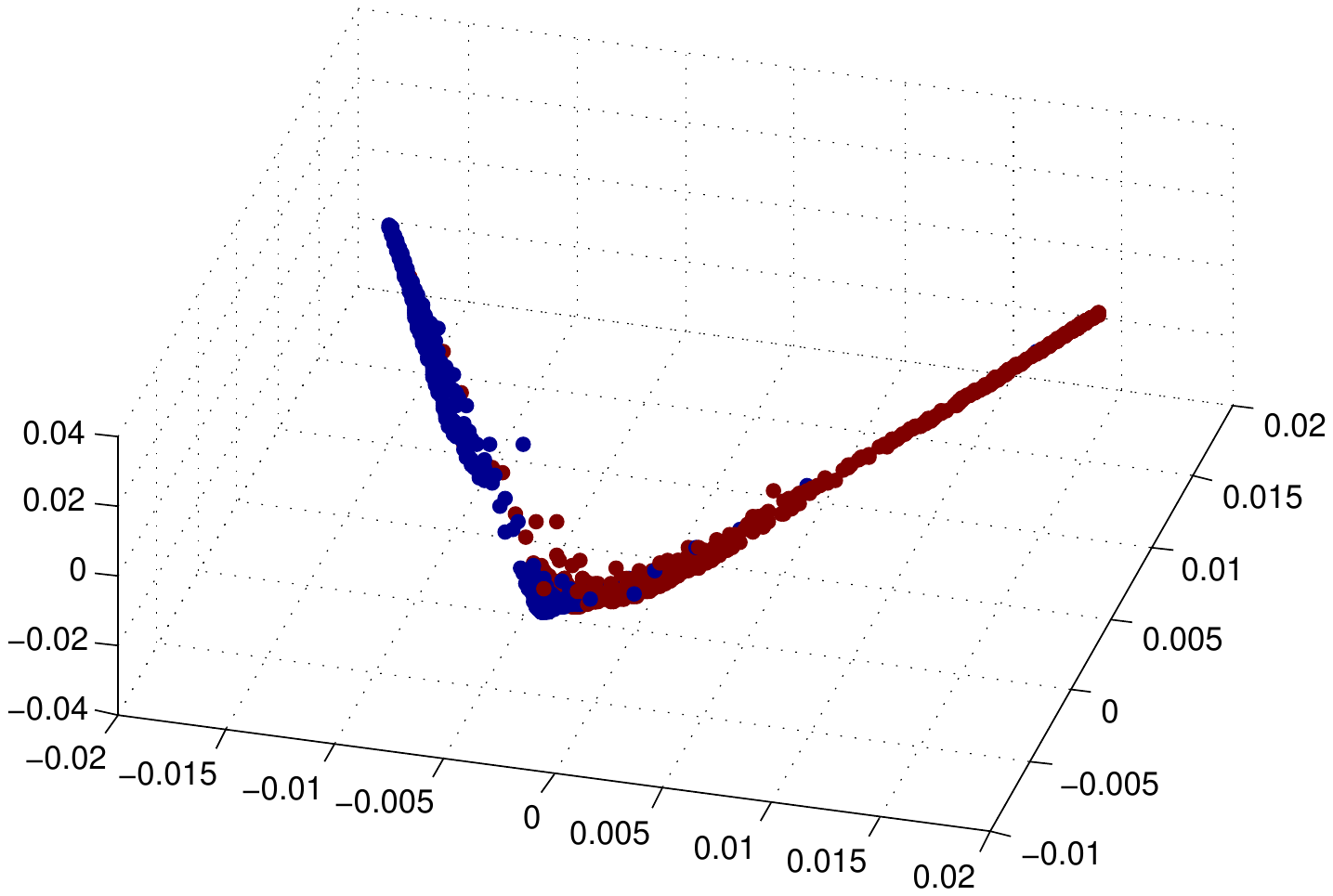}}
    \subfloat[.] {\label{fig:subj38_Tx02} \includegraphics[angle=0, height=0.13\textwidth, width=.23\textwidth]{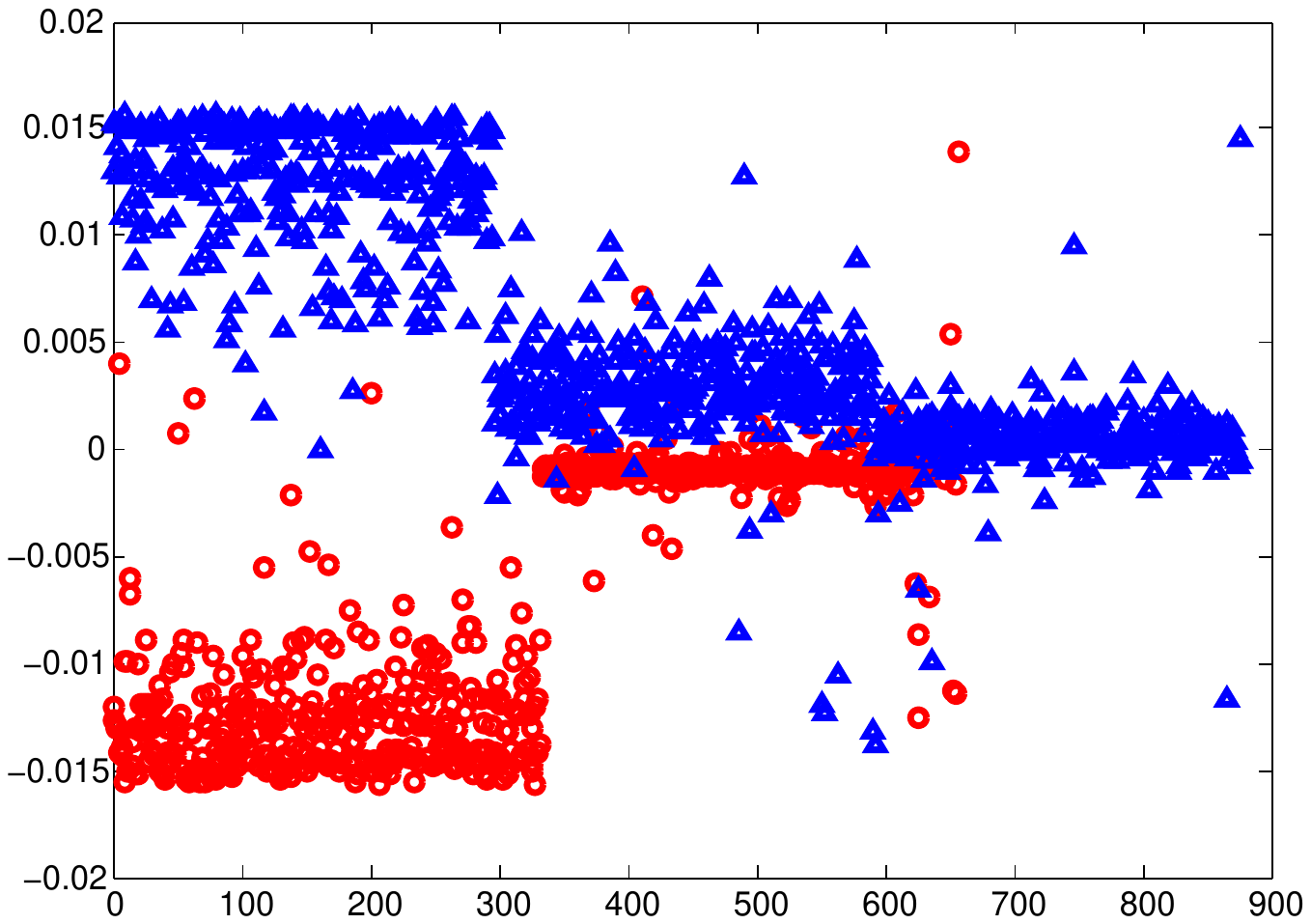}}\\

 \subfloat[.] {\label{fig:digit10_Om03} \includegraphics[angle=0, height=0.13\textwidth, width=.2\textwidth]{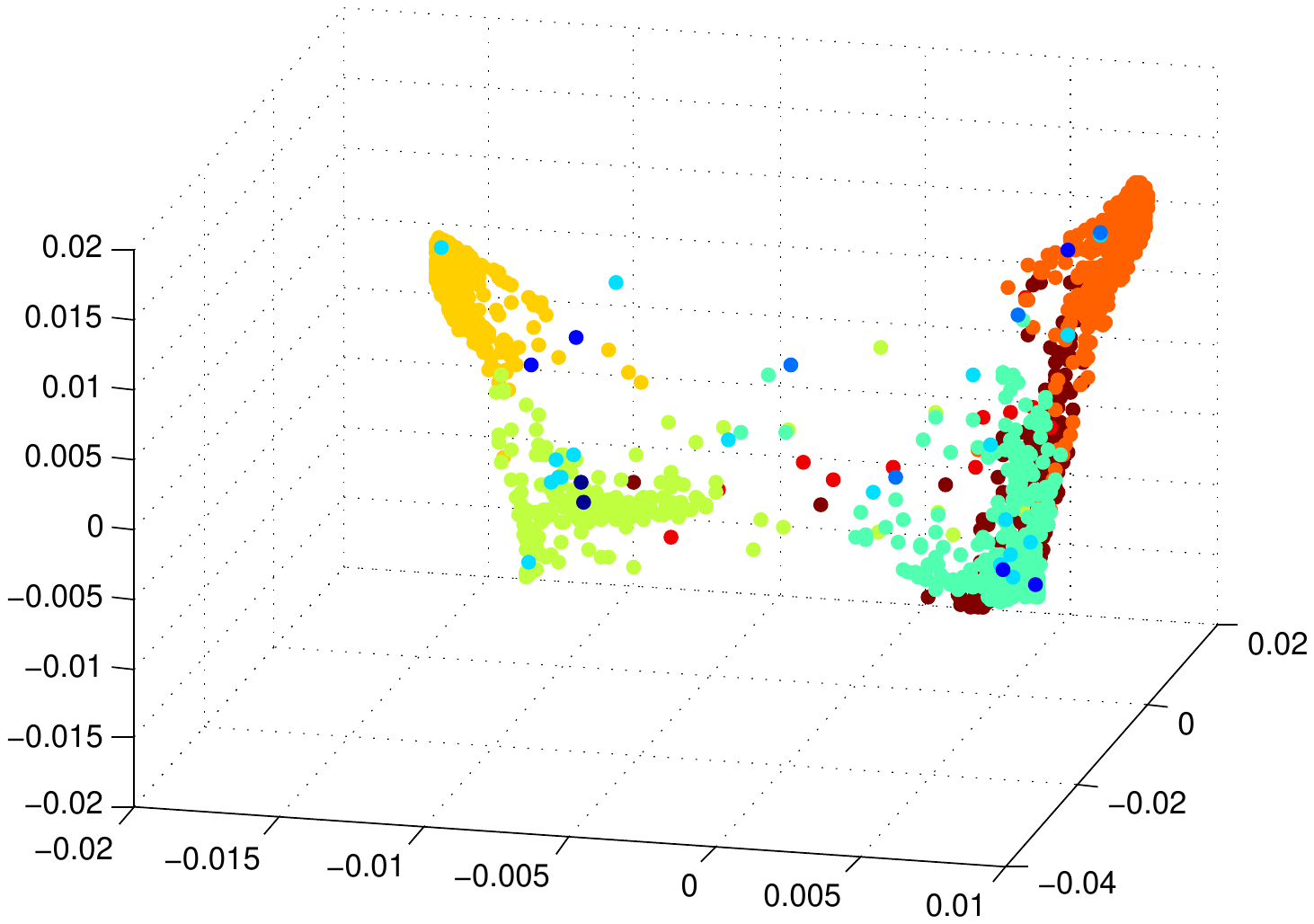} \hspace{0pt}}
  \subfloat[Original.] {\label{fig:digit10_Ob03} \includegraphics[angle=0, height=0.13\textwidth, width=.2\textwidth]{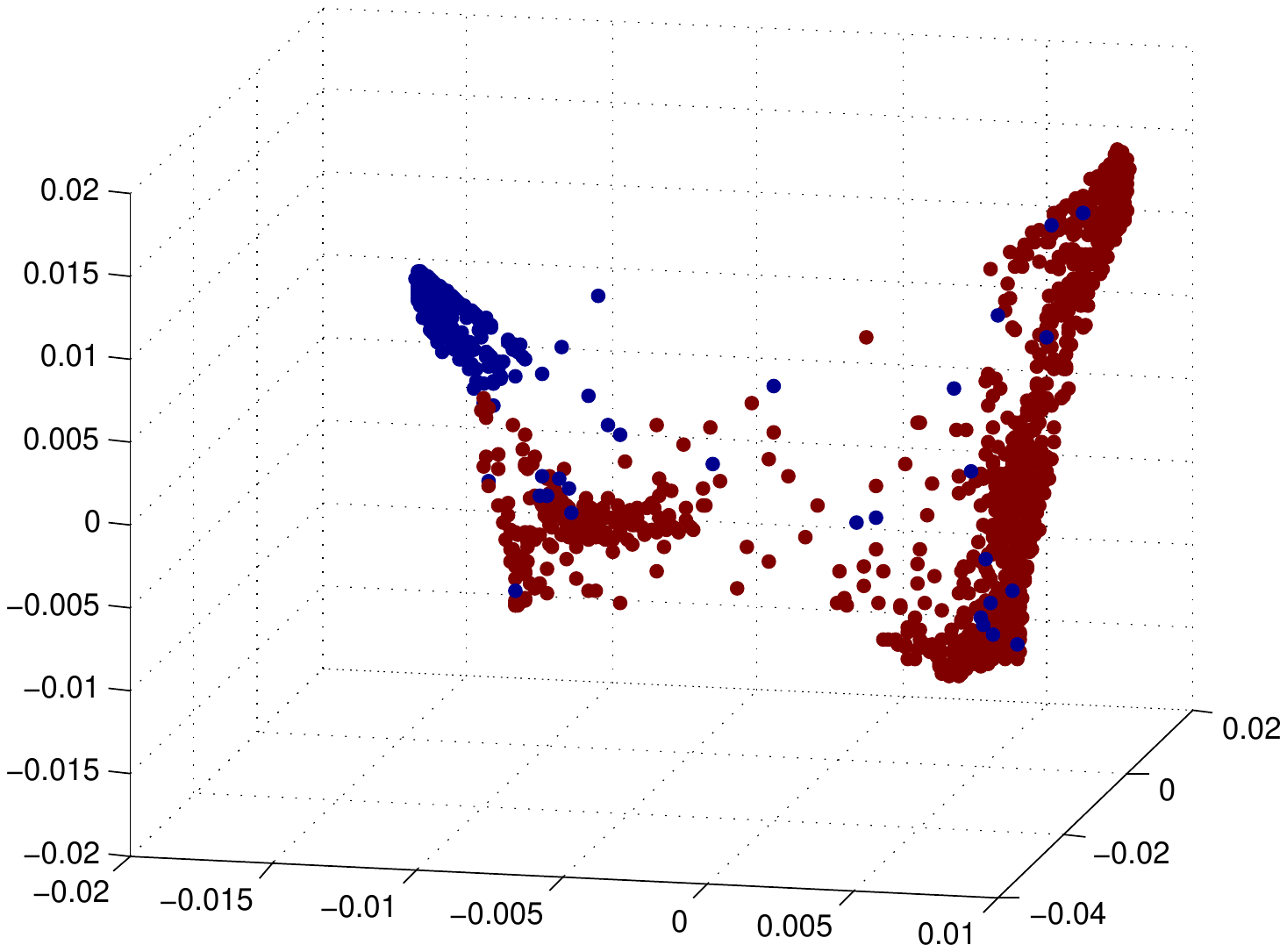}}
  \subfloat[Transformed.] {\label{fig:digit10_Tb03} \includegraphics[angle=0, height=0.13\textwidth, width=.2\textwidth]{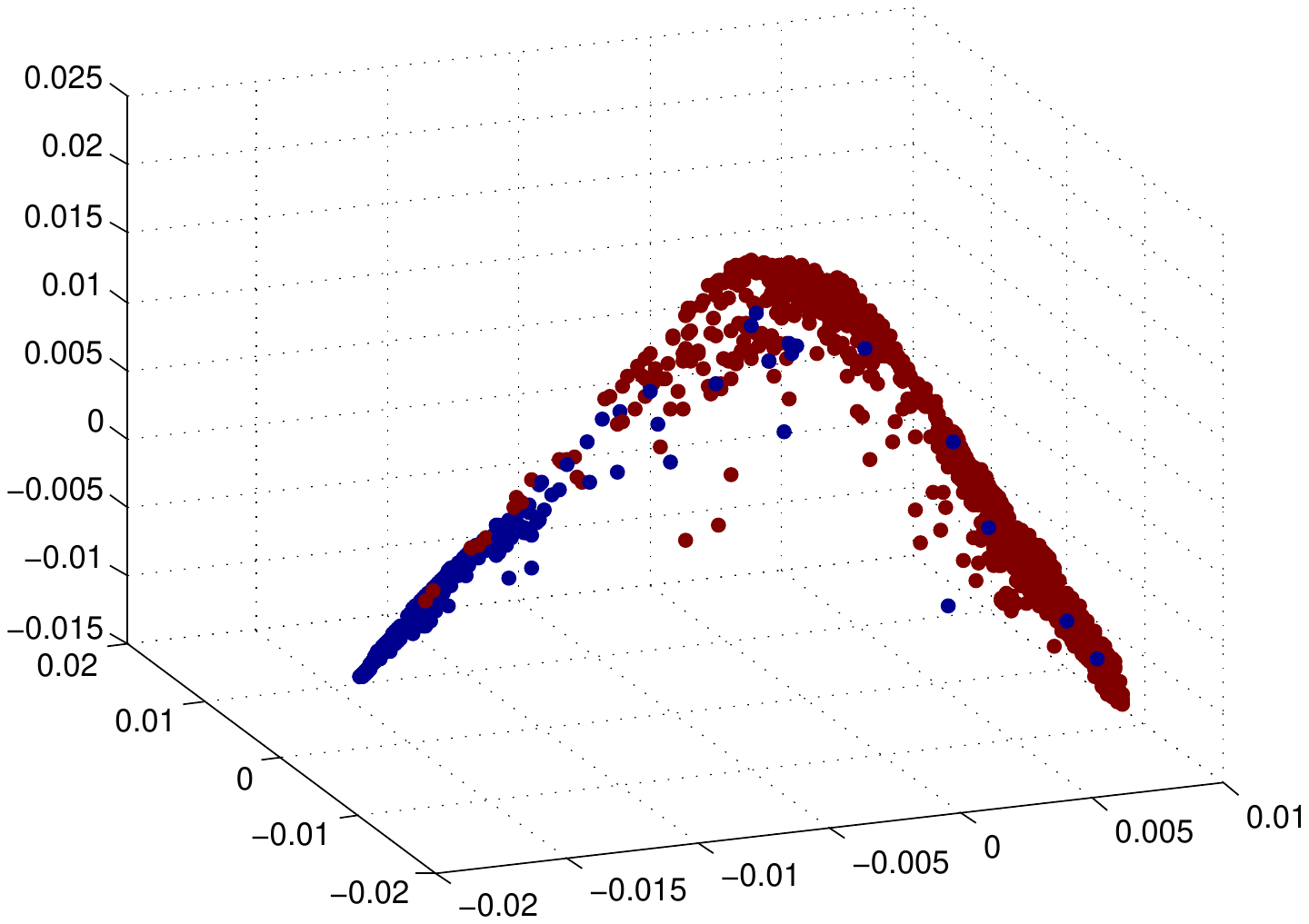}}
    \subfloat[.] {\label{fig:digit10_Tx03} \includegraphics[angle=0, height=0.13\textwidth, width=.23\textwidth]{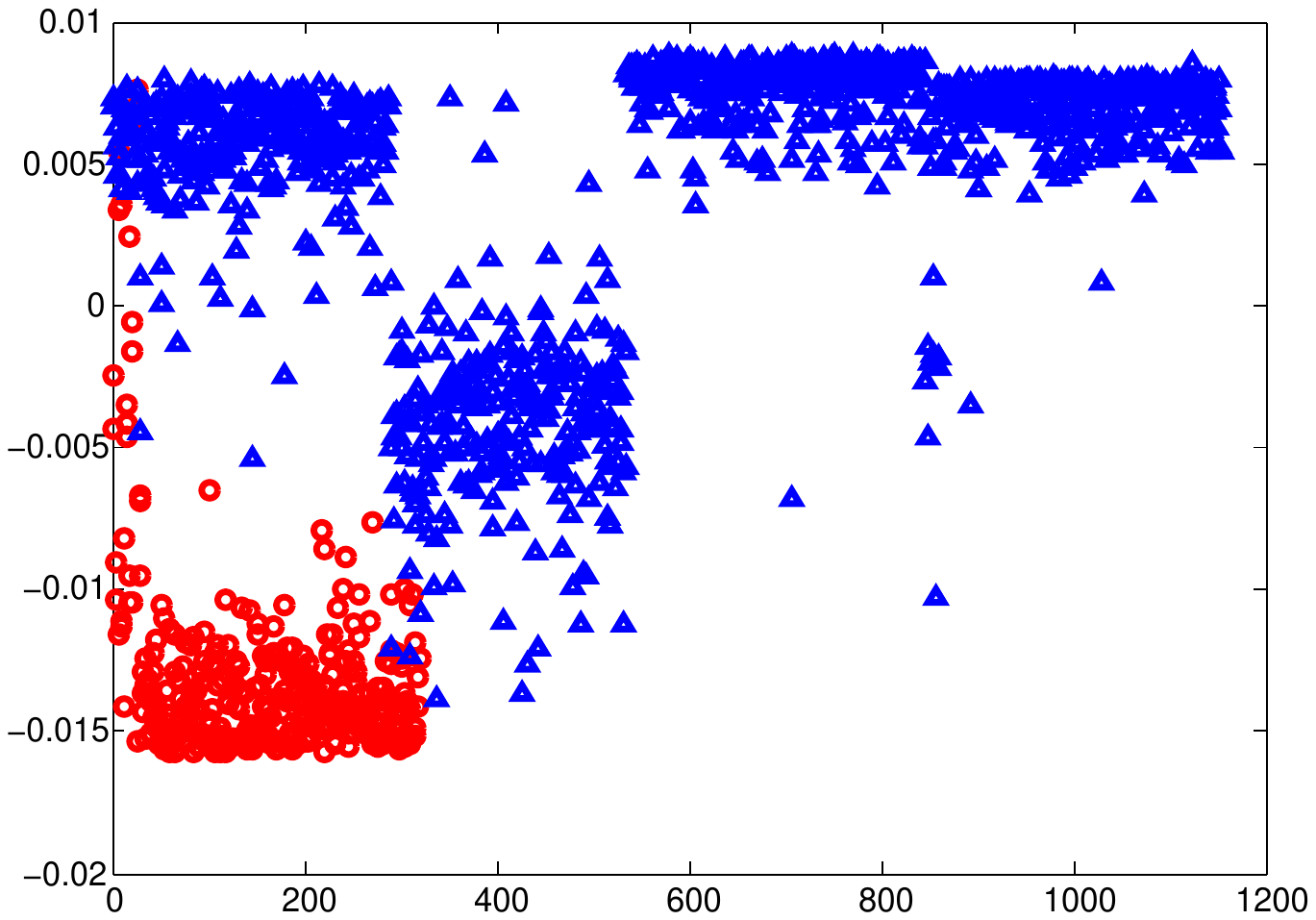}} \\
\caption{Transformation-based learners in a classification tree constructed on the MNIST dataset.
The root split node is shown in the first row and its two child nodes are in the 2nd and 3rd rows.
 The first column denotes training samples in the original subspaces, with different classes in different colors.
 For visualization, the data are plotted with the dimension reduced to 3 using Laplacian Eigenmaps \cite{eigenmap}.
 As shown in the second column, we randomly divide arriving classes into two categories and learn a discriminative transformation using (\ref{nuclear_obj}).
The transformed samples are shown in the third column, clearly demonstrating how data in each class is concentrated while the different classes are separated.
The fourth column shows the first dimension of transformed samples in the third column.
}
\label{fig:digit10}
\end{figure*}

We now evaluate the effect of random training set sampling using the MNIST dataset.
The MNIST dataset has a training set of 60000 examples, and a test set of 10000 examples.
We train 20 classification trees  with a depth of 9, each using only $10\%$ randomly selected training samples (In this paper, we select the random training selection rate to provide each tree about 5000 training samples).
As shown in Fig.~\ref{fig:digit10_acc}, the classification accuracy increases from 93.74\% to 97.30\% by increasing the number of trees to 20. Fig.~\ref{fig:digit10} illustrates in detail the proposed transformation learner model in one of the trees.
As discussed, increasing the number of trees (sublinearly) increases accuracy, at the cost of (linearly) increased test time. Though reporting a better accuracy with hundreds of trees is an option (with limited pedagogical value), a few ($\sim$20) trees are sufficient to illustrate the trade-off between accuracy and performance.

Using the 15-Scenes dataset in Fig.~\ref{fig:15scene}, we further evaluate the effect of randomness introduced by randomly dividing classes arriving at each split node into two categories. We randomly use 100 images per class for training and used the remaining data for testing. We train 20 classification trees with a depth of 5, each using all training samples. As shown in Fig.~\ref{fig:15scene_acc}, the classification accuracy increases from 66.23\% to 79.06\% by increasing the number of trees to 20.
 We notice that, with only 20 trees, the accuracy is already comparable to state-of-the-art results reported on this dataset shown in Table~\ref{tab:sceneacc}. We in general expect the performance increases further by employing more trees.

\begin{table}[ht]
\centering
	\caption{Classification accuracies (\%) for the 15-Scenes dataset.}
{\small
	\begin{tabular}{|l|l|}
	\hline
Method & Accuracy (\%) \\
	\hline
 \hline
ScSPM \cite{yang-cvpr09} & \textbf{80.28} \\
KSPM \cite{scene_15} & 76.73 \\
KC \cite{Gemert-eccv08} & 76.67 \\
LSPM \cite{yang-cvpr09} & 65.32\\
\hline
\hline
Transformation forests & 79.06 \\
\hline
	\end{tabular}
}	
	\label{tab:sceneacc}
%\end{center}
\end{table}

\subsection{Microsoft Kinect}

We finally evaluate the proposed transformation learner in the task of  predicting human body part labels from a depth image.
We adopt the Kinect datatset provided in \cite{RF-online}, where pairs of $640 \times 480$ resolution depth and body part images are rendered from the CMU mocap dataset.
The 19 body parts and one background class are represented by 20 unique color identifiers in the body part image. For this experiment, we only use the 500 testing poses from this dataset. We use the first 450 poses for training and remaining 50 poses for testing.
During training, we sample 10 pixels for each body part in each pose and produce 190 data points for each
depth image. Each pixel is represented using depth difference from its 96 neighbors with radius 8, 32 and 64 respectively, { forming a 288-dim descriptor.}
We train 30 classification trees  with a depth of 9, each using $5\%$ randomly selected training samples.
As shown in Fig.~\ref{fig:kinect_acc}, the classification accuracy increases from 55.48\% to 73.12\% by increasing the number of trees to 30. Fig.~\ref{fig:kinectbody} shows an example input depth image, the groud truth body parts, and the prediction using the proposed method.

%\begin{figure} [ht]
%\centering
% \subfloat[Depth.] {\label{fig:dep01_acc} \includegraphics[angle=0, height=0.21\textwidth, width=.15\textwidth]{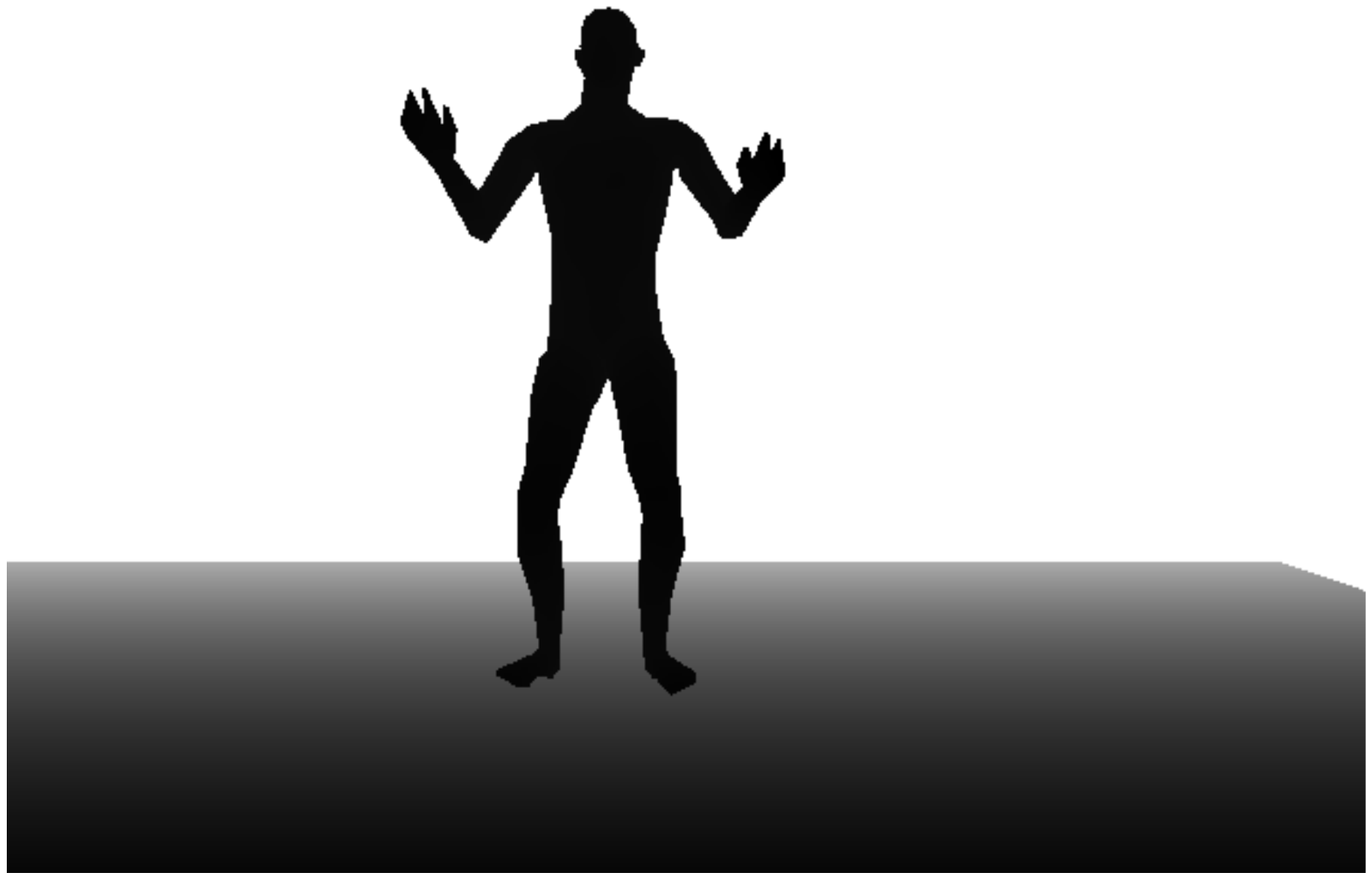} \hspace{0pt}}
%  \subfloat[Groundtruth.] {\label{fig:bodygt01_acc} \includegraphics[angle=0, height=0.21\textwidth, width=.15\textwidth]{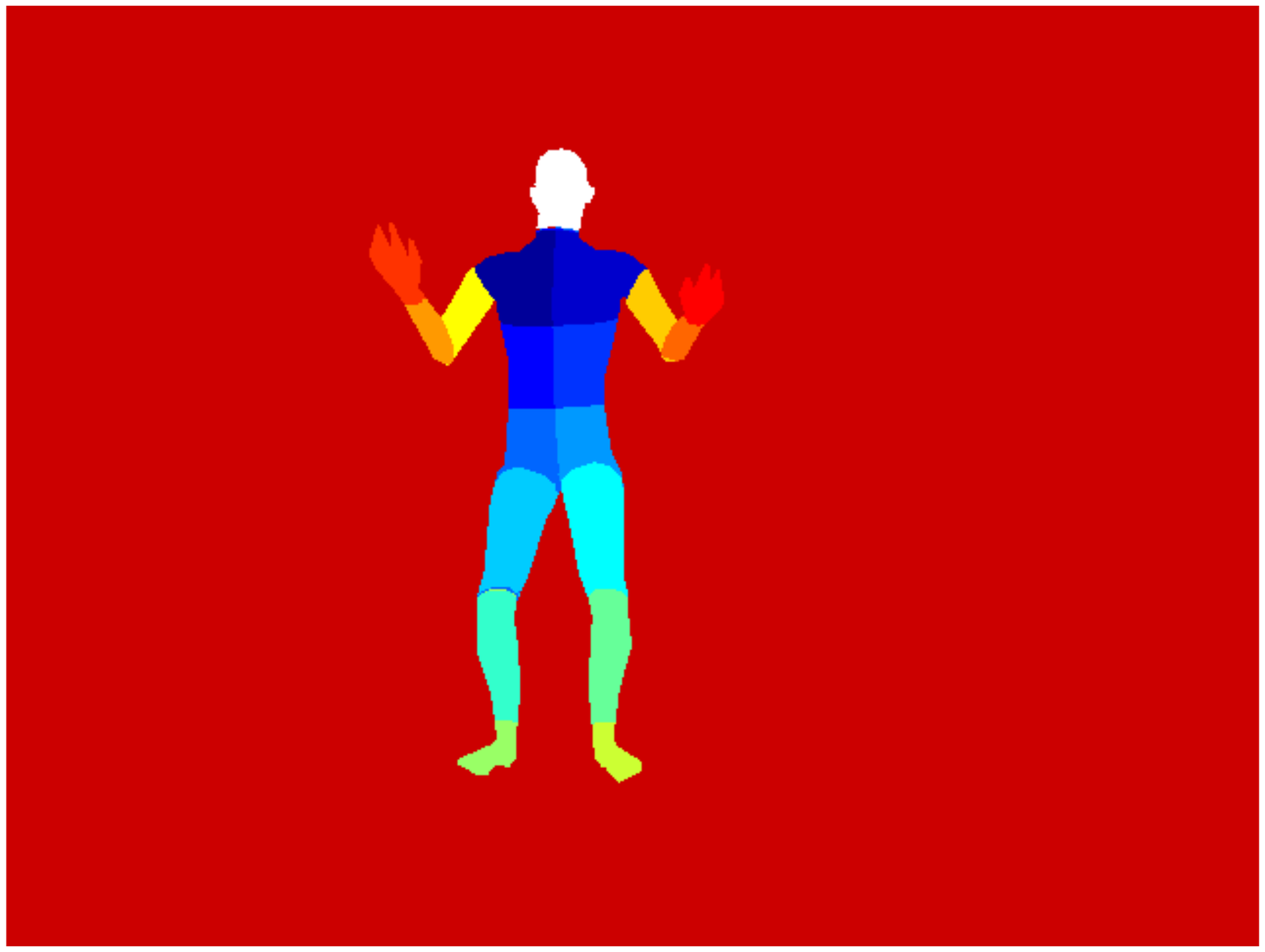}}
%    \subfloat[Prediction.] {\label{fig:body01_acc} \includegraphics[angle=0, height=0.21\textwidth, width=.15\textwidth]{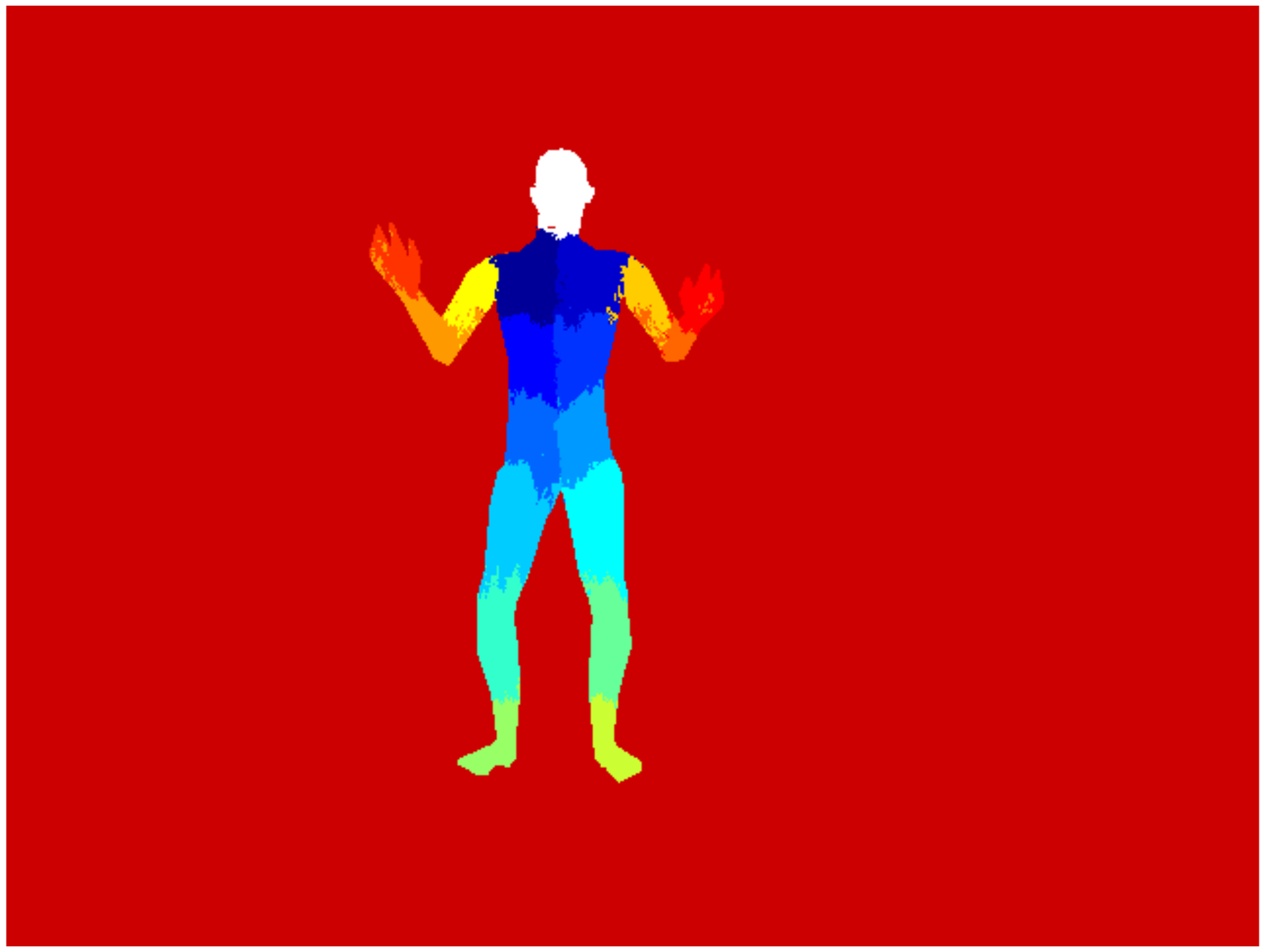}}
%\caption{Body parts prediction from a depth image using transformation forests.}
%\label{fig:kinectbody}
%\end{figure}

\section{Conclusion}
\label{sec:con}

We introduced a transformation-based learner model for classification forests.
{
Using the nuclear norm as optimization criteria,  we learn a  transformation at each split node
that reduces variations/noises within the classes, and increases separations between the classes.
The final classification results combines multiple random trees. Thereby we expect the proposed framework
to be very robust to noise.}
We demonstrated the effectiveness of the proposed learner for classification forests,
and provided theoretical support to these experimental results reported for very diverse datasets.

\begin{figure} [ht]


\centering
 \subfloat[Depth.] {\label{fig:dep01_acc} \includegraphics[angle=0, height=0.21\textwidth, width=.15\textwidth]{dep01.pdf} \hspace{0pt}}
  \subfloat[Groundtruth.] {\label{fig:bodygt01_acc} \includegraphics[angle=0, height=0.21\textwidth, width=.15\textwidth]{bodygt01.pdf}}
    \subfloat[Prediction.] {\label{fig:body01_acc} \includegraphics[angle=0, height=0.21\textwidth, width=.15\textwidth]{body01.pdf}}
\caption{Body parts prediction from a depth image using transformation forests.}
\label{fig:kinectbody}
\end{figure}

\appendix

\section{Proof of Theorem \ref{nuclear_ineq}}
\label{sec:nuclear_ineq}

\begin{proof}
We know that (\cite{nuclear-min})
\begin{align}
||\mathbf{A}||_* =  \underset{\substack{\mathbf{U}, \mathbf{V} \\ \mathbf{A=UV'}}} \min \frac{1}{2}( ||\mathbf{U}||^2_F + ||\mathbf{V}||^2_F). \nonumber
\end{align}
 We denote $\mathbf{U_A}$ and $\mathbf{V_A}$
 the matrices that achieve the minimum; same for $\mathbf{B}$, $\mathbf{U_B}$ and $\mathbf{V_B}$; and same for the concatenation $\mathbf{[A,B]}$, $\mathbf{U_{[A,B]}}$ and $\mathbf{V_{[A,B]}}$.
\noindent We then have
\begin{align} \nonumber
||\mathbf{A}||_*  &= \frac{1}{2}( ||\mathbf{U_A}||^2_F + ||\mathbf{V_A}||^2_F), \\ \nonumber
||\mathbf{B}||_*  &= \frac{1}{2}( ||\mathbf{U_B}||^2_F + ||\mathbf{V_B}||^2_F) .\\ \nonumber
\end{align}
The matrices $[\mathbf{U_A}, \mathbf{U_B}]$ and $[\mathbf{V_A}, \mathbf{V_B}]$ obtained by concatenating the matrices that achieve the minimum for $\mathbf{A}$ and $\mathbf{B}$ when computing their nuclear norm, are not necessarily the ones that achieve the corresponding  minimum in the nuclear norm computation of the concatenation matrix  $[\mathbf{A}, \mathbf{B}]$.
It is easy to show that
\begin{align}
||\mathbf{[A,B]}||_F^2  = ||\mathbf{A}||_F^2 + ||\mathbf{B}||_F^2, \nonumber
\end{align}
where $||\mathbf{\cdot}||_F$ denotes the Frobenius norm. Thus, we have
\begin{align} \nonumber
||\mathbf{[A,B]}||_*  &= \frac{1}{2}( ||\mathbf{U_{[A,B]}}||^2_F + ||\mathbf{V_{[A,B]}}||^2_F) \\ \nonumber
& \le \frac{1}{2}( ||[\mathbf{U_A}, \mathbf{U_B}]||^2_F + ||[\mathbf{V_A},\mathbf{V_B}]||^2_F) \\ \nonumber
&= \frac{1}{2}( ||\mathbf{U_A}||^2_F + ||\mathbf{U_B}||^2_F + ||\mathbf{V_A}||^2_F+ ||\mathbf{V_B}||^2_F) \\ \nonumber
&= \frac{1}{2}( ||\mathbf{U_A}||^2_F + ||\mathbf{V_A}||^2_F) \\ \nonumber
&+ \frac{1}{2}(||\mathbf{U_B}||^2_F+||\mathbf{V_B}||^2_F) \\ \nonumber
&= ||\mathbf{A}||_* + ||\mathbf{B}||_* . \\ \nonumber
\end{align}

We now show the equality condition. We perform the singular value decomposition of $\mathbf{A}$ and $\mathbf{B}$ as
\begin{align}  \nonumber
\mathbf{A} &= [\mathbf{U_{A1}} \mathbf{U_{A2}}] \begin{bmatrix}  \mathbf{\Sigma_A} & 0 \\  0& 0 \end{bmatrix} [\mathbf{V_{A1}} \mathbf{V_{A2}}]', \\ \nonumber
\mathbf{B} &= [\mathbf{U_{B1}} \mathbf{U_{B2}}] \begin{bmatrix}  \mathbf{\Sigma_B} & 0 \\  0& 0 \end{bmatrix} [\mathbf{V_{B1}} \mathbf{V_{B2}}]', \\ \nonumber
\end{align}
where the diagonal entries of $\mathbf{\Sigma_A}$ and $\mathbf{\Sigma_B}$ contain non-zero singular values. We have
\begin{align}  \nonumber
\mathbf{AA'} &= [\mathbf{U_{A1}} \mathbf{U_{A2}}] \begin{bmatrix}  \mathbf{\Sigma_A}^2 & 0 \\  0& 0 \end{bmatrix} [\mathbf{U_{A1}} \mathbf{U_{A2}}]', \\ \nonumber
\mathbf{BB'} &= [\mathbf{U_{B1}} \mathbf{U_{B2}}] \begin{bmatrix}  \mathbf{\Sigma_B}^2 & 0 \\  0& 0 \end{bmatrix} [\mathbf{U_{B1}} \mathbf{U_{B2}}]'. \\ \nonumber
\end{align}

\noindent The column spaces of $\mathbf{A}$ and $\mathbf{B}$ are considered to be orthogonal, i.e., $\mathbf{U_{A1}}'\mathbf{U_{B1}}=0$. The above can be written as
\begin{align}  \nonumber
\mathbf{AA'} &= [\mathbf{U_{A1}} \mathbf{U_{B1}}] \begin{bmatrix}  \mathbf{\Sigma_A}^2 & 0 \\  0& 0 \end{bmatrix} [\mathbf{U_{A1}} \mathbf{U_{B1}}]', \\ \nonumber
\mathbf{BB'} &= [\mathbf{U_{A1}} \mathbf{U_{B1}}] \begin{bmatrix}  0 & 0 \\  0& \mathbf{\Sigma_B}^2 \end{bmatrix} [\mathbf{U_{A1}} \mathbf{U_{B1}}]'. \\ \nonumber
\end{align}

\noindent Then, we have
\begin{align}  \nonumber
\mathbf{[A,B][A,B]'} &= \mathbf{AA'+BB'} \\ \nonumber
&= [\mathbf{U_{A1}} \mathbf{U_{B1}}] \begin{bmatrix}  \mathbf{\Sigma_A}^2 & 0 \\  0& \mathbf{\Sigma_B}^2 \end{bmatrix} [\mathbf{U_{A1}} \mathbf{U_{B1}}]'. \\ \nonumber
\end{align}
The nuclear norm $||\mathbf{A}||_*$ is the sum of the square root of the singular values of $\mathbf{A}\mathbf{A}'$. Thus, $||\mathbf{[A,B]}||_*  = ||\mathbf{A}||_* + ||\mathbf{B}||_*.$
\end{proof}

\section{Basic Propositions}
\label{sec:propos}

\begin{proposition} \label{matrix_ineq}
Let $\mathbf{A}$ and $\mathbf{B}$ be matrices of the same row dimensions, and $\mathbf{[A,B]}$ be the concatenation of $\mathbf{A}$ and $\mathbf{B}$,  we have
\begin{align} \nonumber
||\mathbf{[A,B]}||_2  \le ||\mathbf{A}||_2 + ||\mathbf{B}||_2,
\end{align}
with equality if at least one of the two matrices is zero.
\end{proposition}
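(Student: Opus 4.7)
The plan is to reduce the proposition to the triangle inequality for the operator norm, combined with the observation that zero-padding a matrix does not change its spectral norm.

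First, I would write the concatenation as a sum, $[\mathbf{A},\mathbf{B}] = [\mathbf{A},\mathbf{0}] + [\mathbf{0},\mathbf{B}]$, where the zero blocks are chosen to give each summand the same column dimension as $[\mathbf{A},\mathbf{B}]$. Since the induced 2-norm is a norm, the triangle inequality immediately gives
\begin{equation*}
\|[\mathbf{A},\mathbf{B}]\|_2 \le \|[\mathbf{A},\mathbf{0}]\|_2 + \|[\mathbf{0},\mathbf{B}]\|_2.
\end{equation*}
Next I would show that $\|[\mathbf{A},\mathbf{0}]\|_2 = \|\mathbf{A}\|_2$ and similarly $\|[\mathbf{0},\mathbf{B}]\|_2 = \|\mathbf{B}\|_2$. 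This can be seen directly from the variational definition $\|\mathbf{M}\|_2 = \max_{\|v\|_2=1}\|\mathbf{M}v\|_2$: for any unit vector $v = \begin{bmatrix}x\\ y\end{bmatrix}$, one has $[\mathbf{A},\mathbf{0}]v = \mathbf{A}x$ with $\|x\|_2 \le 1$, so the maximum is attained by taking $y=0$ and $x$ a top right singular vector of $\mathbf{A}$. (Equivalently, padding with a zero block leaves the nonzero singular values unchanged.) Combining these two steps yields the desired inequality.

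For the equality clause, I would simply verify it by substitution: if $\mathbf{B}=\mathbf{0}$, then $[\mathbf{A},\mathbf{B}] = [\mathbf{A},\mathbf{0}]$, so by the zero-padding identity $\|[\mathbf{A},\mathbf{B}]\|_2 = \|\mathbf{A}\|_2 = \|\mathbf{A}\|_2 + \|\mathbf{B}\|_2$, and symmetrically if $\mathbf{A}=\mathbf{0}$. No obstacle is expected here, since this is a sufficient condition rather than an ``if and only if'' claim.

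The entire argument is routine once the decomposition $[\mathbf{A},\mathbf{B}] = [\mathbf{A},\mathbf{0}] + [\mathbf{0},\mathbf{B}]$ is recognized, so there is no real hard step. The only point that deserves a line of justification is the zero-padding identity for the spectral norm, which I would handle via the variational characterization as above rather than via singular value decomposition, to keep the proof short.
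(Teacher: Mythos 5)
Your argument is correct. One thing to note up front: the paper states Proposition~\ref{matrix_ineq} in the appendix without any proof, so there is no in-paper argument to compare against line by line. Your route --- decompose $[\mathbf{A},\mathbf{B}] = [\mathbf{A},\mathbf{0}] + [\mathbf{0},\mathbf{B}]$, apply the triangle inequality for the operator norm, and justify the zero-padding identity $\|[\mathbf{A},\mathbf{0}]\|_2 = \|\mathbf{A}\|_2$ via the variational characterization --- is a clean and complete proof of both the inequality and the (merely sufficient) equality clause. It is worth pointing out how this sits relative to the technique the paper does use for the analogous nuclear-norm statement (Theorem~\ref{nuclear_ineq}): there the authors work with $[\mathbf{A},\mathbf{B}][\mathbf{A},\mathbf{B}]' = \mathbf{A}\mathbf{A}' + \mathbf{B}\mathbf{B}'$. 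The same identity gives a sharper version of your result for the spectral norm, namely $\|[\mathbf{A},\mathbf{B}]\|_2 = \|\mathbf{A}\mathbf{A}'+\mathbf{B}\mathbf{B}'\|_2^{1/2} \le \left(\|\mathbf{A}\|_2^2+\|\mathbf{B}\|_2^2\right)^{1/2} \le \|\mathbf{A}\|_2+\|\mathbf{B}\|_2$, and it shows in addition that equality in the stated bound in fact forces one of the matrices to vanish (an ``only if'' the proposition does not claim). Your more elementary triangle-inequality argument buys simplicity and avoids any spectral computation; the Gram-matrix route buys the stronger quantitative bound and the converse of the equality condition. Either is acceptable for what the proposition asserts.
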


\begin{proposition} \label{fnorm_ineq}
Let $\mathbf{A}$ and $\mathbf{B}$ be matrices of the same row dimensions, and $\mathbf{[A,B]}$ be the concatenation of $\mathbf{A}$ and $\mathbf{B}$,  we have
\begin{align} \nonumber
||\mathbf{[A,B]}||_F  \le ||\mathbf{A}||_F + ||\mathbf{B}||_F,
\end{align}
with equality if and only if at least one of the two matrices is zero.
\end{proposition}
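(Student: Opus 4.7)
The plan is to reduce the claim to the elementary scalar fact that $\sqrt{a^2+b^2} \le a+b$ for non-negative reals $a,b$, with equality if and only if $ab=0$. The key ingredient is the Pythagorean identity
$||[\mathbf{A},\mathbf{B}]||_F^2 = ||\mathbf{A}||_F^2 + ||\mathbf{B}||_F^2$, which was already invoked in the proof of Theorem~\ref{nuclear_ineq} in Appendix~\ref{sec:nuclear_ineq} and which itself follows immediately from the fact that the squared Frobenius norm is just the sum of squared entries, while concatenation merely pools the two sets of entries into one matrix.

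First I would set $a := ||\mathbf{A}||_F \ge 0$ and $b := ||\mathbf{B}||_F \ge 0$. By the identity above, $||[\mathbf{A},\mathbf{B}]||_F = \sqrt{a^2+b^2}$. Next I would compare $(a+b)^2 = a^2 + 2ab + b^2$ with $a^2 + b^2$: their difference is $2ab \ge 0$, so $(a+b)^2 \ge ||[\mathbf{A},\mathbf{B}]||_F^2$, and taking square roots (both sides non-negative) yields $||[\mathbf{A},\mathbf{B}]||_F \le ||\mathbf{A}||_F + ||\mathbf{B}||_F$, which is the desired inequality.

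For the sharper ``if and only if'' part, I would observe that the inequality is tight exactly when $2ab = 0$, i.e., when $a=0$ or $b=0$. Since $||\cdot||_F$ is a genuine norm, $a=0$ iff $\mathbf{A}=\mathbf{0}$, and similarly for $b$. Therefore equality holds precisely when at least one of $\mathbf{A},\mathbf{B}$ is the zero matrix, establishing both directions.

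There is no real obstacle here; the proof is essentially a one-line consequence of the Pythagorean identity. The only point worth highlighting is the contrast with Proposition~\ref{matrix_ineq} for the induced $2$-norm, which only asserts the ``if'' direction: that norm does not admit a clean Pythagorean identity under concatenation (one instead has $||[\mathbf{A},\mathbf{B}]||_2^2 = ||\mathbf{A}\mathbf{A}^\top + \mathbf{B}\mathbf{B}^\top||_2$), so equality can occur even when neither summand vanishes. The strictness of the ``only if'' in the Frobenius case is exactly what the entrywise additivity of squared norms buys us.
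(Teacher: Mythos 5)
Your proof is correct and complete. The paper states Proposition~\ref{fnorm_ineq} without giving any proof, so there is nothing to compare against; the route you take --- the Pythagorean identity $||[\mathbf{A},\mathbf{B}]||_F^2 = ||\mathbf{A}||_F^2 + ||\mathbf{B}||_F^2$ followed by the scalar fact that $\sqrt{a^2+b^2} \le a+b$ with equality iff $ab=0$, plus positive definiteness of the norm --- is the natural one, and it reuses exactly the identity the paper itself invokes inside the proof of Theorem~\ref{nuclear_ineq}. One correction to your closing aside, which is not part of the proof proper: for the induced $2$-norm the ``only if'' direction actually holds as well, since
\begin{align} \nonumber
||[\mathbf{A},\mathbf{B}]||_2^2 = \lambda_{\max}(\mathbf{A}\mathbf{A}'+\mathbf{B}\mathbf{B}') \le ||\mathbf{A}||_2^2 + ||\mathbf{B}||_2^2 \le \left(||\mathbf{A}||_2 + ||\mathbf{B}||_2\right)^2,
\end{align}
and equality in the last step again forces $||\mathbf{A}||_2\,||\mathbf{B}||_2 = 0$, i.e., one of the matrices to be zero. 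So equality cannot occur there with both summands nonzero either; the genuine contrast between the two propositions as stated in the paper is only that Proposition~\ref{matrix_ineq} asserts the weaker ``if'' direction. This does not affect the validity of your argument for Proposition~\ref{fnorm_ineq}.
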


\bibliography{forest}

\begin{thebibliography}{29}
\providecommand{\natexlab}[1]{#1}
\providecommand{\url}[1]{\texttt{#1}}
\expandafter\ifx\csname urlstyle\endcsname\relax
  \providecommand{\doi}[1]{doi: #1}\else
  \providecommand{\doi}{doi: \begingroup \urlstyle{rm}\Url}\fi

\bibitem[Aharon et~al.(2006)Aharon, Elad, and Bruckstein]{Elad_KSVD}
Aharon, M., Elad, M., and Bruckstein, A.
\newblock {K-SVD}: An algorithm for designing overcomplete dictionaries for
  sparse representation.
\newblock \emph{IEEE Trans. on Signal Processing}, 54\penalty0 (11):\penalty0
  4311--4322, Nov. 2006.

\bibitem[Basri \& Jacobs(2003)Basri and Jacobs]{9point}
Basri, R. and Jacobs, D.~W.
\newblock Lambertian reflectance and linear subspaces.
\newblock \emph{IEEE Trans. on Patt. Anal. and Mach. Intell.}, 25\penalty0
  (2):\penalty0 218--233, 2003.

\bibitem[Belkin \& Niyogi(2003)Belkin and Niyogi]{eigenmap}
Belkin, M. and Niyogi, P.
\newblock Laplacian eigenmaps for dimensionality reduction and data
  representation.
\newblock \emph{Neural Computation}, 15:\penalty0 1373--1396, 2003.

\bibitem[Breiman(2001)]{RF2001}
Breiman, L.
\newblock Random forests.
\newblock \emph{Machine Learning}, 45\penalty0 (1):\penalty0 5--32, 2001.

\bibitem[Cand\`{e}s et~al.(2011)Cand\`{e}s, Li, Ma, and Wright]{rpca}
Cand\`{e}s, E.~J., Li, X., Ma, Y., and Wright, J.
\newblock Robust principal component analysis?
\newblock \emph{J. ACM}, 58\penalty0 (3):\penalty0 11:1--11:37, June 2011.

\bibitem[Criminisi \& Shotton(2013)Criminisi and Shotton]{RFBook}
Criminisi, A. and Shotton, J.
\newblock \emph{{Decision Forests for Computer Vision and Medical Image
  Analysis}}.
\newblock Springer, 2013.

\bibitem[Denil et~al.(2013)Denil, Matheson, and Nando]{RF-online}
Denil, M., Matheson, D., and Nando, D.~F.
\newblock Consistency of online random forests.
\newblock In \emph{International Conference on Machine Learning}, 2013.

\bibitem[Elhamifar \& Vidal(2013)Elhamifar and Vidal]{SSC}
Elhamifar, E. and Vidal, R.
\newblock Sparse subspace clustering: Algorithm, theory, and applications.
\newblock \emph{IEEE Trans. on Patt. Anal. and Mach. Intell.}, 2013.
\newblock To appear.

\bibitem[Fazel(2002)]{rank-min}
Fazel, M.
\newblock {Matrix Rank Minimization with Applications}.
\newblock \emph{PhD thesis, Stanford University}, 2002.

\bibitem[Gall \& Lempitsky(2009)Gall and Lempitsky]{RF-obj}
Gall, J. and Lempitsky, V.
\newblock Class-specific hough forests for object detection.
\newblock In \emph{Proc. IEEE Computer Society Conf. on Computer Vision and
  Patt. Recn.}, 2009.

\bibitem[Gemert et~al.(2008)Gemert, Geusebroek, Veenman, and
  Smeulders]{Gemert-eccv08}
Gemert, J.~C., Geusebroek, J., Veenman, C.~J., and Smeulders, A.~W.
\newblock Kernel codebooks for scene categorization.
\newblock In \emph{Proc. European Conference on Computer Vision}, 2008.

\bibitem[Hastie \& Simard(1998)Hastie and Simard]{ocr}
Hastie, T. and Simard, P.~Y.
\newblock Metrics and models for handwritten character recognition.
\newblock \emph{Statistical Science}, 13\penalty0 (1):\penalty0 54--65, 1998.

\bibitem[Jiang et~al.(2011)Jiang, Lin, and Davis]{lcksvd}
Jiang, Z., Lin, Z., and Davis, L.~S.
\newblock Learning a discriminative dictionary for sparse coding via label
  consistent {K-SVD}.
\newblock In \emph{Proc. IEEE Computer Society Conf. on Computer Vision and
  Patt. Recn.}, Colorado springs, CO, 2011.

\bibitem[Kuybeda et~al.(2013)Kuybeda, Frank, Bartesaghi, Borgnia, Subramaniam,
  and Sapiro]{3dalign}
Kuybeda, O., Frank, G.~A., Bartesaghi, A., Borgnia, M., Subramaniam, S., and
  Sapiro, G.
\newblock A collaborative framework for {3D} alignment and classification of
  heterogeneous subvolumes in cryo-electron tomography.
\newblock \emph{Journal of Structural Biology}, 181:\penalty0 116--127, 2013.

\bibitem[Lazebnik et~al.(2006)Lazebnik, Schmid, and Ponce]{scene_15}
Lazebnik, S., Schmid, C., and Ponce, J.
\newblock Beyond bags of features: Spatial pyramid matching for recognizing
  natural scene categories.
\newblock In \emph{Proc. IEEE Computer Society Conf. on Computer Vision and
  Patt. Recn.}, 2006.

\bibitem[Miao \& Ben-Israel(1992)Miao and Ben-Israel]{prin_angle}
Miao, J. and Ben-Israel, A.
\newblock On principal angles between subspaces in rn.
\newblock \emph{Linear Algebra and its Applications}, 171\penalty0
  (0):\penalty0 81 -- 98, 1992.

\bibitem[Moosmann et~al.(2007)Moosmann, Triggs, and Jurie]{RF-clustering}
Moosmann, F., Triggs, B., and Jurie, F.
\newblock Fast discriminative visual codebooks using randomized clustering
  forests.
\newblock In \emph{Advances in Neural Information Processing Systems}, 2007.

\bibitem[Peng et~al.(2010)Peng, Ganesh, Wright, Xu, and Ma]{RASL}
Peng, Y., Ganesh, A., Wright, J., Xu, W., and Ma, Y.
\newblock {RASL}: Robust alignment by sparse and low-rank decomposition for
  linearly correlated images.
\newblock In \emph{Proc. IEEE Computer Society Conf. on Computer Vision and
  Patt. Recn.}, San Francisco, USA, 2010.

\bibitem[Qiu \& Sapiro(2013)Qiu and Sapiro]{lowrankT}
Qiu, Q. and Sapiro, G.
\newblock Learning transformations for clustering and classification.
\newblock \emph{CoRR}, abs/1309.2074, 2013.

\bibitem[Quinlan(1993)]{c4.5}
Quinlan, J.~Ross.
\newblock \emph{{C4.5: Programs for Machine Learning}}.
\newblock Morgan Kaufmann Publishers Inc., 1993.

\bibitem[Recht et~al.(2010)Recht, Fazel, and Parrilo]{Recht}
Recht, B., Fazel, M., and Parrilo, P.~A.
\newblock Guaranteed minimum rank solutions to linear matrix equations via
  nuclear norm minimization.
\newblock \emph{SIAM Review}, 52\penalty0 (3):\penalty0 471--501, 2010.

\bibitem[Shen \& Wu(2012)Shen and Wu]{lrsalient}
Shen, X. and Wu, Y.
\newblock A unified approach to salient object detection via low rank matrix
  recovery.
\newblock In \emph{Proc. IEEE Computer Society Conf. on Computer Vision and
  Patt. Recn.}, Rhode Island, USA, 2012.

\bibitem[Shotton et~al.(2012)Shotton, Girshick, Fitzgibbon, Sharp, Cook,
  Finocchio, Moore, Kohli, Criminisi, Kipman, and Blake]{RF-pose}
Shotton, J., Girshick, R., Fitzgibbon, A., Sharp, T., Cook, M., Finocchio, M.,
  Moore, R., Kohli, P., Criminisi, A., Kipman, A., and Blake, A.
\newblock Efficient human pose estimation from single depth images.
\newblock \emph{IEEE Trans. on Patt. Anal. and Mach. Intell.}, 99\penalty0
  (PrePrints), 2012.

\bibitem[Srebro et~al.(2005)Srebro, Rennie, and Jaakkola]{nuclear-min}
Srebro, N., Rennie, J., and Jaakkola, T.
\newblock Maximum margin matrix factorization.
\newblock In \emph{Advances in Neural Information Processing Systems},
  Vancouver, Canada, 2005.

\bibitem[Tomasi \& Kanade(1992)Tomasi and Kanade]{sfm}
Tomasi, C. and Kanade, T.
\newblock Shape and motion from image streams under orthography: a
  factorization method.
\newblock \emph{International Journal of Computer Vision}, 9:\penalty0
  137--154, 1992.

\bibitem[Wright et~al.(2009)Wright, Yang, Ganesh, Sastry, and Ma]{Wright09}
Wright, J., Yang, M., Ganesh, A., Sastry, S., and Ma, Y.
\newblock Robust face recognition via sparse representation.
\newblock \emph{IEEE Trans. on Patt. Anal. and Mach. Intell.}, 31\penalty0
  (2):\penalty0 210--227, 2009.

\bibitem[Yang et~al.(2009)Yang, Yu, Gong, and Huang]{yang-cvpr09}
Yang, J., Yu, K., Gong, Y., and Huang, T.
\newblock Linear spatial pyramid matching using sparse coding for image
  classification.
\newblock In \emph{Proc. IEEE Computer Society Conf. on Computer Vision and
  Patt. Recn.}, 2009.

\bibitem[Zhang \& Li(2010)Zhang and Li]{Zhang10}
Zhang, Q. and Li, B.
\newblock Discriminative {k-SVD} for dictionary learning in face recognition.
\newblock In \emph{Proc. IEEE Computer Society Conf. on Computer Vision and
  Patt. Recn.}, San Francisco, CA, 2010.

\bibitem[Zhang et~al.(2011)Zhang, Liang, Ganesh, and Ma]{TILT}
Zhang, Z., Liang, X., Ganesh, A., and Ma, Y.
\newblock {TILT}: transform invariant low-rank textures.
\newblock In \emph{Proc. Asian conference on Computer vision}, Queenstown, New
  Zealand, 2011.

\end{thebibliography}
\bibliographystyle{icml2014}

\end{document}